\documentclass{article}

    \PassOptionsToPackage{numbers, compress}{natbib}
 \usepackage[preprint]{neurips_2026}

\newif\ifshowchecklist
\makeatletter
\@ifpackagewith{neurips_2026}{preprint}{\showchecklistfalse}{\showchecklisttrue}
\makeatother

\usepackage[utf8]{inputenc} 
\usepackage[T1]{fontenc}    
\usepackage{hyperref}       
\usepackage{url}            
\usepackage{booktabs}       
\usepackage{amsfonts}       
\usepackage{nicefrac}       
\usepackage{microtype}      
\usepackage{xcolor}         

\newcommand{\atopn}{Atop-N}

\usepackage{amsmath}
\usepackage{amssymb}
\usepackage{mathtools}
\usepackage{amsthm}
\theoremstyle{plain}
\newtheorem{theorem}{Theorem}[section]

\newtheorem{lemma}[theorem]{Lemma}

\theoremstyle{definition}

\theoremstyle{remark}

\usepackage{amsmath,amsfonts,bm}
\usepackage{nicefrac}

\def\eqref#1{equation~\ref{#1}}

\def\1{\bm{1}}

\def\rvi{{\mathbf{i}}}

\def\rmA{{\mathbf{A}}}
\def\rmB{{\mathbf{B}}}
\def\rmC{{\mathbf{C}}}

\def\rmK{{\mathbf{K}}}

\def\rmM{{\mathbf{M}}}

\def\rmO{{\mathbf{O}}}
\def\rmP{{\mathbf{P}}}
\def\rmQ{{\mathbf{Q}}}

\def\rmS{{\mathbf{S}}}

\def\rmV{{\mathbf{V}}}
\def\rmW{{\mathbf{W}}}
\def\rmX{{\mathbf{X}}}

\DeclareMathAlphabet{\mathsfit}{\encodingdefault}{\sfdefault}{m}{sl}
\SetMathAlphabet{\mathsfit}{bold}{\encodingdefault}{\sfdefault}{bx}{n}

\newcommand{\R}{\mathbb{R}}

\newcommand{\softmax}{\mathrm{softmax}}

\usepackage{xspace}
\makeatletter
\DeclareRobustCommand\onedot{\futurelet\@let@token\@onedot}
\def\@onedot{\ifx\@let@token.\else.\null\fi\xspace}

\def\etal{\emph{et al}\onedot}
\makeatother

\newcommand{\email}[1]{\scalebox{0.88}[1]{\texttt{#1}}}

\usepackage[capitalize,noabbrev]{cleveref}
\crefname{section}{Sec.}{Secs.}
\Crefname{section}{Section}{Sections}
\crefname{equation}{Eq.}{Eqs.}
\Crefname{equation}{Equation}{Equations}
\crefname{figure}{Fig.}{Figs.}
\Crefname{figure}{Figure}{Figures}
\crefname{table}{Tab.}{Tabs.}
\Crefname{table}{Table}{Tables}
\crefname{appendix}{App.}{Apps.}
\Crefname{appendix}{Appendix}{Appendices}

\title{More Value per Key: Asymmetric Sparse Attention for Faster LLM Decoding}

\author{%
  Noam Elata\thanks{Equal contribution.} \\
  Technion -- Haifa, Israel \\
  \email{noamelata@technion.ac.il} \\
  \And
  Itay Lamprecht\footnotemark[1]\hspace{0.5em}\thanks{Work was partially done while the author was at Intel, Israel.} \\
  Technion -- Haifa, Israel \\
  Crusoe AI \\
  \makebox[0pt][c]{\email{itayl@campus.technion.ac.il}}
  \And
  Mikey Shechter\footnotemark[1] \\
  Technion -- Haifa, Israel \\
  \email{m.sachter@technion.ac.il} \\
  \And
  Daniel Ohayon \\
  Corma \\
  Technion -- Haifa, Israel \\
  \email{ohayon.daniel4@gmail.com} \\
  \And
  Itay Hubara \\
  Stealth Startup \\
  \email{itayhubara@gmail.com} \\
  \And
  Daniel Soudry \\
  Technion -- Haifa, Israel \\
  \email{daniel.soudry@gmail.com} \\
}

\begin{document}

\maketitle
\setcounter{footnote}{0}

\begin{abstract}
    Autoregressive generation in Large Language Models (LLMs) is constrained by the memory and computational demands of attention mechanisms. Sparse attention methods mitigate this cost by selecting only high-probability entries of the attention matrix. We observe that in many such methods, this renders the probability-value multiplication negligible, shifting the bottleneck to the query-key step. Key heads can therefore be reduced to accelerate inference, while retaining more value heads preserves capacity with limited additional decoding cost. We introduce Sparse Asymmetric Group-Query Attention (SAGA), which decouples key and value head counts to exploit this principle, and pair it with approximate top-N (Atop-N) attention, a simple sparse attention method designed to study the interaction between sparsity and head-count asymmetry. We formalize the benefits of this asymmetry theoretically and validate them empirically through latency measurements and quality evaluations on models up to 1.5B parameters. Together, SAGA and Atop-N achieve end-to-end decoding speedups exceeding $2\times$ over our full-attention GQA baseline at long contexts. Models trained from scratch with SAGA nearly match the quality of comparable GQA variants on the evaluated benchmarks. To facilitate adoption, we introduce an efficient fine-tuning method that converts pretrained models to the SAGA architecture, enabling practitioners to benefit from our approach without costly retraining.\footnote{Code is available at \url{https://github.com/noamelata/SAGA}.}
\end{abstract}

\section{Introduction}

Attention-based architectures have become dominant across AI applications~\cite{vaswani2017attention}, with large language models such as GPT~\cite{radford2019language, brown2020language}, Llama~\cite{touvron2023llama}, and Qwen~\cite{qwen2, qwen2.5} demonstrating strong capabilities in natural language understanding~\cite{wang2019superglue}, generation~\cite{papineni2002bleu}, and reasoning~\cite{wei2022chain}. However, the quadratic scaling of attention with sequence length~\cite{shen2021efficient, keles2023computational} remains a significant bottleneck, constraining context window sizes and limiting the ability of models to reason over lengthy documents.

For autoregressive generation, KV caching~\cite{pope2023efficiently} provides substantial speedups by reusing key and value computations across generation steps. This shifts the bottleneck from compute capacity to memory capacity and bandwidth~\cite{davies2025liminalexploringfrontiersllm}. Group-Query Attention (GQA)~\cite{ainslie2023gqa} further exploits this setting by reducing the number of key-value heads, accelerating inference for long sequences. Yet even with these optimizations, bounded context windows remain a practical necessity. 

Sparse attention methods~\cite{child2019generating, beltagy2020longformer, zaheer2020big, kitaev2020reformerefficienttransformer, yuan2025nsa} offer a complementary direction, accelerating attention by computing only a subset of query-key interactions. Such methods span both the prefill and decoding phases, and range from zero-shot approximations to approaches integrated into pre-training. In this work, we focus on the decoding phase and use a generic sparsity method, as primary examples of the effects of sparsity on the computational profile of generation. To this end, we adopt top-N attention as a representative sparsity method. Building on previous observations~\cite{gupta2021memory, wang2022kvt} that models assign significant attention weight to only a small subset of keys, top-N attention has the advantage of simplicity: it requires no training to apply and admits a portable implementation, a property uncommon among sparse attention methods. Our efficient approximate top-N attention (\atopn) implementation gathers only the most relevant attention entries, reducing the probability-value multiplication to a fraction of its original cost.

Analyzing the resulting computational profile reveals a fundamental asymmetry: under sparse attention, the bottleneck shifts from the probability-value multiplication to the query-key multiplication. This insight motivates Sparse Asymmetric Group-Query Attention (SAGA), an architectural modification that decouples the number of key heads $h_k$ from value heads $h_v$. While GQA ties these together, SAGA reduces $h_k$ independently, directly accelerating the query-key multiplication. The number of value heads remains large to preserve model capacity, incurring no additional cost since only the top-N values are accessed. We formalize this asymmetry theoretically and validate it empirically, demonstrating end-to-end speedups exceeding $2\times$ in long contexts while maintaining the majority of baseline quality.

To facilitate practical adoption, we introduce an efficient fine-tuning method for converting existing GQA models to the SAGA architecture. We merge teacher key heads using a closed-form initialization derived from minimum-weight perfect matching on attention logit reconstruction error (\cref{sec:gqa_to_saga}), followed by a short distillation phase. This enables the benefits of SAGA without costly retraining.

To summarize, our contributions are:
\vspace{-0.5em}
\begin{itemize}
    \item We present a novel analysis of LLM inference trade-offs with accelerated sparse attention.
    \item We introduce \atopn, a simple and portable sparse attention algorithm for decoding.
    \item Motivated by our finding, we propose SAGA, an extension of the GQA architecture tailored for sparse attention inference. We show that SAGA is competitive both when trained from scratch and when finetuned from existing LLM transformers, while demonstrating more than $2\times$ runtime speedups end-to-end on long sequence generation.
\end{itemize}

\section{Background and related work}

\subsection{Multi-head self-attention}

The Transformer architecture, introduced by Vaswani \etal~\cite{vaswani2017attention}, revolutionized natural language processing. Its core mechanism, multi-head self-attention, enables parallel processing of sequences and captures long-range dependencies by allowing the model to attend to all positions simultaneously.

Formally, given a hidden state $\rmX \in \R^{l_q \times D}$, we apply three projections $\rmW_Q, \rmW_K, \rmW_V \in \R^{D \times (h \cdot d)}$ to obtain queries $\rmQ \in \R^{h \times l_q \times d}$, keys $\rmK \in \R^{h \times l_{kv} \times d}$, and values $\rmV \in \R^{h \times l_{kv} \times d}$:
\begin{equation}
\label{eq:projections}
    \rmQ = \rmX \rmW_Q, \quad \rmK = \rmX \rmW_K, \quad \rmV = \rmX \rmW_V, 
\end{equation}
where $h$ is the number of heads, $D$ and $d$ are the hidden and head dimension sizes respectively, and $l_q$ and $l_{kv}$ are the query and key-value sequence lengths. The attention output is computed as:

\begin{minipage}[c]{0.32\columnwidth}
  \begin{equation}
  \label{eq:qk}
  \rmA = \frac{\rmQ \rmK^\top}{\sqrt{d}}
  \end{equation}
\end{minipage}
\hfill
\begin{minipage}[c]{0.32\columnwidth}
\setlength{\abovedisplayskip}{0pt}
\setlength{\belowdisplayskip}{0pt}
  \begin{equation}
  \label{eq:softmax}
      \rmP = \text{softmax}(\rmA)
  \end{equation}
\end{minipage}
\hfill
\begin{minipage}[c]{0.32\columnwidth}
\setlength{\abovedisplayskip}{0pt}
\setlength{\belowdisplayskip}{0pt}
  \begin{equation}
  \label{eq:pv}
      \rmO = \rmP \rmV,
  \end{equation}
\end{minipage}

\noindent where $\rmA, \rmP \in \R^{h \times l_q \times l_{kv}}$ denote the attention and probability matrix respectively. The output $\rmO$ is then multiplied by $\rmW_O\in \R^{(h \cdot d) \times D}$. Modern LLMs stack many such layers in a decoder-only design, modeling complex relationships between all sequence elements regardless of distance.

\subsection{KV cache}

Key-Value (KV) caching~\cite{pope2023efficiently} is an optimization technique that significantly improves the inference efficiency of autoregressive Transformer models. During the generation of each new token, Transformer models would typically recompute the key and value projections for all previous tokens in the sequence. The KV cache stores  projections from previous steps, eliminating redundant computations.

For an autoregressive model generating the $l_{kv}$-th token, the attention computation with KV caching can be expressed as:
\begin{equation}
    \rmO_{l_{kv}} = \text{softmax}\left(\frac{\rmQ_{l_{q}} [\rmK_{1:({l_{kv}}-1)}, \rmK_{l_{kv}}]^\top}{\sqrt{d}}\right)[\rmV_{1:({l_{kv}}-1)}, \rmV_{l_{kv}}],
\end{equation}
where $\rmK_{1:({l_{kv}}-1)}$ and $\rmV_{1:({l_{kv}}-1)}$ are retrieved from the cache, and only $\rmQ_{l_{q}}$, $\rmK_{l_{kv}}$, and $\rmV_{l_{kv}}$ need to be computed for the new token.

By caching key and value vectors from previous time steps, the model only needs to compute new projections for the most recently generated token, leading to substantial speedups during inference. This optimization is particularly valuable for long-form text generation, where the model must generate many tokens sequentially. The trade-off is increased memory usage proportional to sequence length, as the cache must store key-value pairs for all tokens in the context window.

\subsection{Group-query attention and head reduction}

Group-Query Attention (GQA)~\cite{ainslie2023gqa} addresses the memory overhead of KV caching by recognizing an asymmetry in the attention computation during decoding: there are significantly more cached keys and values than queries ($l_{kv} \gg l_q$). In this context, it is sensible to allocate more of the model's capacity to the queries, which are computed fresh at each decoding step, rather than to the keys and values, which are retrieved from the cache.

GQA achieves this by decoupling the number of key-value heads $h_{kv}$ from the number of query heads $h_q$, such that $h_q > h_{kv}$. Multiple query heads share a single key-value head, reducing the size of the KV cache while preserving the expressiveness of the query projections. This approach yields a significant improvement in inference efficiency with negligible degradation in model quality, making it a widely adopted technique in modern LLM architectures.

Beyond reducing the total number of KV heads, recent work has explored making key and value head counts asymmetric. DHA~\cite{chen2024dha} fuses similar attention heads in MHA architecture into decoupled per-layer key and value configurations to reduce memory footprint. Sigma~\cite{lin2025sigma} performs differential rescaling of query, key, and value dimensions during pretraining and finds that reducing key capacity is preferable to reducing value capacity---an observation consistent with our analysis. While conceptually related, SAGA differs in both motivation and applicability, exploiting the shifted computational bottleneck under sparse attention to accelerate decoding. Multi-head latent attention with DeepSeek Sparse Attention~\cite{liu2024deepseek, deepseekv32} is related in spirit but must be trained from scratch with a shared latent KV and a learned indexer; it reduces the number of key \emph{positions} scored per query, whereas SAGA reduces the number of key \emph{heads}, so the two are orthogonal rather than competing.

\begin{figure*}[t]
  \vskip 0.2in
  \begin{center}
    \centerline{\includegraphics[width=\textwidth]{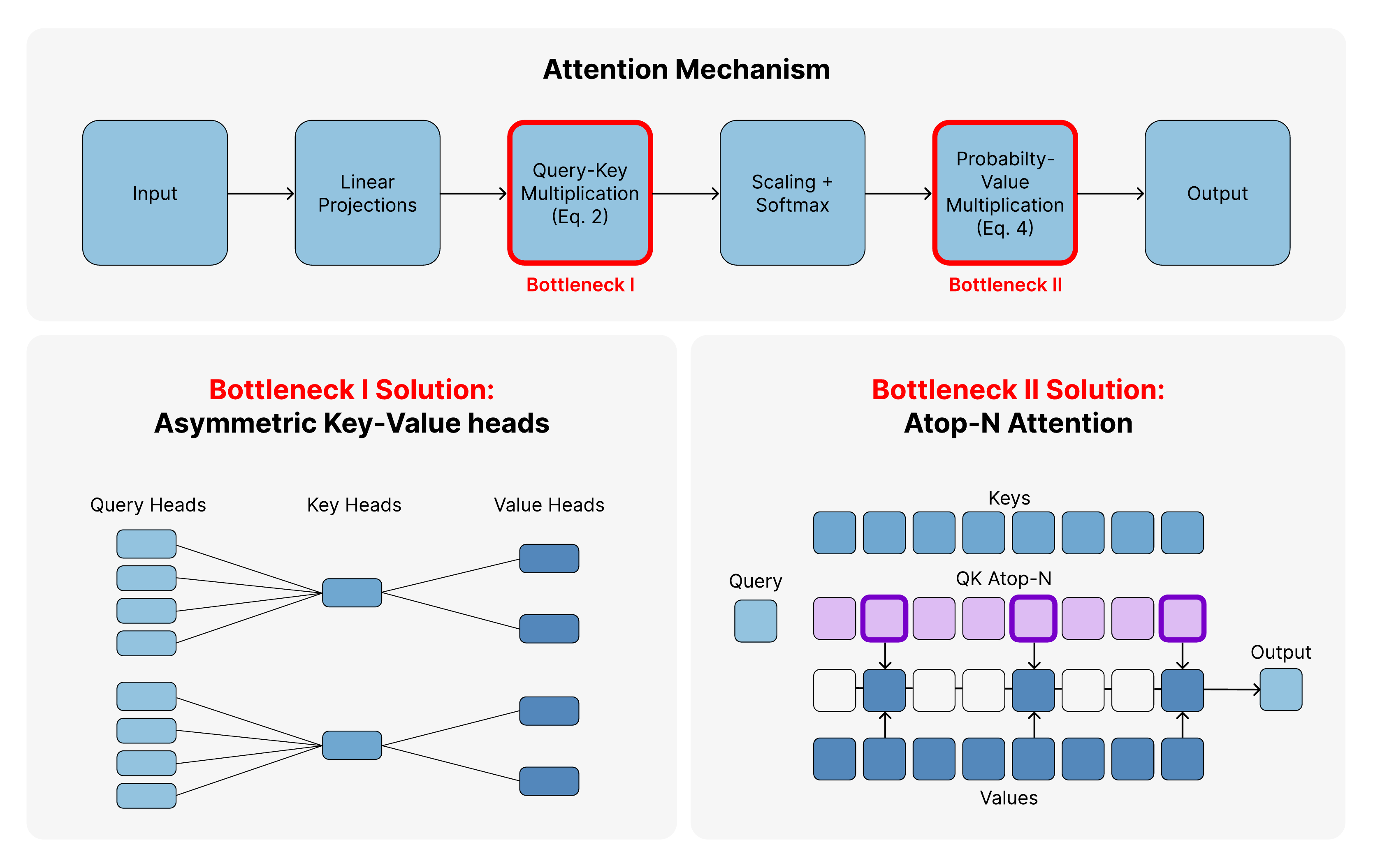}}
    \caption{
      Diagram of SAGA. There are two significant decoding bottlenecks in attention mechanisms: key-query multiplications and probability-value multiplications (top). SAGA addresses the query-key bottleneck by reducing the number of key heads (left). The probability-value bottleneck is mitigated by utilizing Atop-N to select important values (right).  
    }
    \label{fig:saga-diagram}
  \end{center}
\end{figure*}

\subsection{Attention sparsity and KV cache acceleration}

Recent efforts to mitigate the quadratic complexity of the attention mechanism have largely focused on attention sparsity and KV cache compression. Sparse attention methods, such as BigBird \cite{zaheer2020big}, Longformer \cite{beltagy2020longformer}, and Sparse Transformer \cite{child2019generating}, utilize predefined patterns like sliding windows or global tokens to restrict the number of query-key pairs computed. While effective for the prefill phase, these static patterns often struggle to capture the dynamic, data-dependent importance of tokens during autoregressive decoding. NSA~\cite{yuan2025nsa}, SpargeAttention~\cite{zhangspargeattention}, BLASST~\cite{yuan2025blasst}, and BSFA~\cite{ohayon2025block} also aim to bridge the gap between algorithmic sparsity and hardware efficiency, mostly focusing on the more compute-intensive pre-fill phase. Our work, SAGA, focuses instead on the decode phase, and offers a new framework by combining a simple attention sparsity method with a dedicated neural architecture.

Other methods accelerate decoding by evicting low-importance tokens from the KV cache~\cite{zhang2023h2o, xiao2024efficientstreaminglanguagemodels} or selectively loading cache pages using query-aware metadata~\cite{li2024snapkv, tang2024quest}. Complementary techniques reduce cache memory via quantization~\cite{liu2024minicache, sheng2023flexgen}. SAGA is orthogonal to these approaches; it optimizes the head architecture itself and can be readily combined with cache compression methods.

\subsection{Top-N attention}

Top-N attention~\cite{gupta2021memory, wu2022memorizing} (also known as top-K attention or $k$NN attention) is based on the observation that only a small subset of attention keys meaningfully influence the attention output. This sparsity arises from the softmax operation, which effectively learns to identify the best query-key matches while suppressing mismatched pairs.

Formally, top-N attention identifies the indices of the $N$ largest values in each row of $\rmA$:
\begin{equation}
\label{eq:top-n}
    \rvi_{j,k,1:N} = \text{TopN}(\rmA_{j,k,1:l_{kv}}),
\end{equation}
and masks all other entries of $\rmA$ not in $\rvi$ before applying the softmax. This formulation has been shown to enable memory-efficient implementations~\cite{gupta2021memory, wu2022memorizing}, though it does not yield computational speedups since the full attention matrix must still be computed. Several recent works have extended top-N attention in various directions~\cite{bertsch2023unlimiformer, lou2024sparser} or strengthened its theoretical foundations~\cite{haris2024k}.

\section{Method}

We begin by introducing the attention sparsity method underlying SAGA. For clarity, we present an architecture-agnostic formulation of top-N attention to illustrate our approach and the trade-offs many online sparse attention method induce. In the following analysis, we show that under sparse attention, the computational bottleneck shifts from the probability-value multiplication to the query-key multiplication---the property that SAGA is designed to exploit.

\subsection{Efficient approximate top-N attention for LLM decoding}
\label{sec:top-n-attn}

We propose an efficient framework for top-N attention designed for LLM decoding. Using the indices $\rvi \in \mathbb{Z}^{h \times l_q \times N}$ from \cref{eq:top-n}, we define the \textbf{Top-N Attention Matrix} $\rmA[\rvi] \in \R^{h \times l_q \times N}$ and corresponding \textbf{Top-N Values} $\rmV[\rvi]
\in \R^{h \times l_q \times N \times d}$, and compute attention over only the gathered entries:
\begin{equation}
  \label{eq:gathered-attn}
  \rmO = \softmax(\rmA[\rvi]) \rmV[\rvi ].
\end{equation}
In autoregressive generation ($l_q = 1$), this reduces the probability-value cost from $O(h\, l_{kv}\, d)$ to $O(h\, N\, d)$, potentially saving nearly 50\% of computation and bandwidth when $N \ll l_{kv}$. However, na\"ively finding the true top-N entries incurs lengthy hardware runtimes, negating the benefit.

We address this with an approximate threshold approach: given the $N$-th largest attention value as a threshold, top-N entries can be identified by a simple per-element comparison --- a parallel, portable operation much faster than sorting. Since attention distributions shift only marginally between consecutive decoding steps, we compute exact thresholds once every $n$ tokens and reuse them, amortizing the cost over the full text generation. We term this method \atopn{} attention; full implementation details are in \cref{app:implementation}.

For long sequences, \atopn{} yields an overall speedup of nearly $2\times$ for the full attention operation. \Cref{sec:experiments} shows that for practical values of $N$, \atopn{} retains 83--97\% of baseline RULER accuracy.

\subsection{Sparse Asymmetric Group-Query Attention}

The analysis in the previous section reveals that under \atopn{} attention, the probability-value multiplication accesses only $h_q \times l_q \times N$ value vectors, $N$ per query head, regardless of the number of key-value heads $h_{kv}$. Consequently, $h_{kv}$ affects computational complexity only through the query-key multiplication in \cref{eq:qk}. This asymmetry suggests that the number of key heads and value heads should be disentangled.

We propose Sparse Asymmetric Group-Query Attention (SAGA), which decouples the key and value head counts into separate hyperparameters $h_k$ and $h_v$. Reducing $h_k$ directly reduces the memory bandwidth required to load keys during the query-key multiplication step (\cref{eq:qk}), the dominant latency bottleneck in LLM decoding under \atopn{} attention. The number of value heads $h_v$ can remain large to preserve model capacity, since only the top-N values are accessed regardless of $h_v$. A diagram of SAGA is shown in ~\cref{fig:saga-diagram}.

Beyond the bandwidth reduction, fewer key heads yield a smaller key projection $\rmW_K$ (reducing per-token compute) and a smaller KV cache footprint, enabling longer context lengths or larger batch sizes within the same memory budget.

SAGA can be applied directly when training from scratch, as detailed in \cref{sec:scratch}. To leverage existing pretrained models, we describe a procedure for converting GQA models to SAGA in \cref{sec:gqa_to_saga} below.

\section{Converting GQA models to SAGA}
\label{sec:gqa_to_saga}
We describe a practical procedure for converting a pretrained GQA model into a SAGA model. Our goal is to transform the attention module of an existing GQA model $T$, with equal numbers of key and value heads
\begin{equation}
h_k^T = h_v^T,    
\end{equation}
into an attention module of a SAGA model $S$ with fewer key heads than value heads,
\begin{equation}
h_k^S < h_v^S,
\end{equation}
while preserving the attention behavior as closely as possible. The conversion consists of two stages:
\begin{enumerate}
    \item Choosing an initialization for the SAGA key projection matrices $\rmW_K^S$ that minimizes deviation from the original GQA attention, as described below.
    \item Distillation-based fine-tuning, using the original GQA model $T$ as a teacher and the SAGA model $S$ as a student, as implemented in \cref{sec:distillation}.
\end{enumerate} 
Our GQA to SAGA conversion recipe naturally extends to multi-head attention (MHA) models.

\subsection{Initializing SAGA from GQA}\label{sec:init_saga} We consider a GQA model with $h_k^T$ key heads, $h_v^T=h_k^T$ value heads, and $h_q$ query heads. We convert it to a SAGA model with fewer key heads $h_k^S$ while keeping the same number of value heads $h_v^S=h_v^T$. We assume the reduction factor is a positive integer $m \geq 2$
\begin{equation}
h_k^T = m \cdot h_k^S \qquad (m\in\mathbb{N}),
\end{equation}
so that each SAGA key head is shared by exactly $m$ value heads. For simplicity we describe the case $m=2$, i.e.\ $h_k^T=2h_k^S$, and note that the construction extends directly to any $m \in \mathbb{N}$ by replacing the minimum-weight perfect matching in \Cref{thm:merge_theorem} with a minimum-weight perfect $m$-partitioning of $G$. Let $\rmW_K^1, \ldots, \rmW_K^{2h_k^S}$ denote the key projection matrices, and $\rmW_Q^1, \ldots, \rmW_Q^{h_q}$ the query projection matrices of an attention module of the teacher model $T$. 

Let $\rmX \in \mathbb{R}^{L \times D}$ denote a matrix of hidden states sampled from a representative dataset. Our objective is to construct new key projections $\hat{\rmW}_K^1, \ldots, \hat{\rmW}_K^{h_k^S}$ for the SAGA model such that the resulting attention logits
\begin{equation}
\frac{(\rmX\rmW_Q^j)(\rmX\hat{\rmW}_K^i)^\top}{\sqrt d}
\end{equation}
closely approximate the corresponding logits produced by the original GQA model for every query head $j$ that originally shared one of the two merged teacher key heads. 
We choose to optimize only the merged key projections, and to match the attention logits while ignoring rotary embeddings because this is the broadest objective that enables us to derive a closed-form solution. Matching attention probabilities, or jointly optimizing the query and key projections, leads to a nonlinear problem with no closed-form minimizer.

Reducing the number of key heads requires deciding which original key heads should be merged together. We formalize this choice using a permutation $\sigma = (\sigma_1, \ldots, \sigma_{2h_k^S})$ of $\{1,\ldots,2h_k^S\}$. The permutation specifies a partition of the original key heads into $h_k^S$ disjoint pairs:
\begin{equation}
(\sigma_{2i-1}, \sigma_{2i}), \quad i = 1,\ldots,h_k^S.
\end{equation}
Each pair is merged into a single SAGA key head. Once $\sigma$ is determined, we apply the induced permutations to the query, value, and output projection matrices so that they remain aligned with the reordered key heads and each merged key is shared by its intended value-head pair. The induced permutations definitions are given in \cref{app:permute_qvo}. Now we explain how we choose the permutation and merge the chosen key heads. For each teacher key head $r \in \{1,\ldots,2h_k^S\}$, let $\mathcal{Q}_r$ denote the query heads that share $\rmW_K^r$, and let $\rmQ^j := \rmX\rmW_Q^j$ denote the realized query matrices. For any pair of teacher key heads $a,b \in \{1,\ldots,2h_k^S\}$ and candidate merged projection $\rmM$, define the pairwise logit-matching loss
\begin{equation}
\begin{aligned}
\mathcal{L}_{a,b}(\rmM)
:=
\sum_{j \in \mathcal{Q}_a}
&
\left\lVert
\rmQ^j(\rmX\rmM)^\top
-
\rmQ^j(\rmX\rmW_K^a)^\top
\right\rVert_{\mathrm{F}}^2
\\
+
\sum_{j \in \mathcal{Q}_b}
&
\left\lVert
\rmQ^j(\rmX\rmM)^\top
-
\rmQ^j(\rmX\rmW_K^b)^\top
\right\rVert_{\mathrm{F}}^2
\end{aligned}
\end{equation}
where $\lVert A \rVert_{\mathrm{F}} = \sqrt{\sum_{i,j} A_{ij}^2}$ denotes the Frobenius norm of a matrix $A$.
We omit the attention scaling factor $1/\sqrt d$, since it multiplies the entire objective by the constant $1/d$ and therefore does not affect the minimizer. Our objective is to jointly choose the pairing $\sigma$ and the merged key projections $\rmM^{[1:h_k^S]}$ so as to minimize the total squared logit reconstruction error over the dataset:
\begin{equation}
\label{eq:merge_objective}
\begin{aligned}
\min_{\sigma}
\min_{\rmM^{[1:h_k^S]}}
\sum_{i=1}^{h_k^S}
\mathcal{L}_{\sigma_{2i-1},\sigma_{2i}}(\rmM^i).
\end{aligned}
\end{equation}

\begin{lemma}
\label{lem:pair_logit_merge}
For any pair of teacher key heads $a,b \in \{1,\ldots,2h_k^S\}$, an optimal merged key projection minimizing $\mathcal{L}_{a,b}(\rmM)$ is
\begin{equation}
\hat{\rmW}_K(a,b)
=
\left(
\rmW_K^a\rmS_a
+
\rmW_K^b\rmS_b
\right)
\left(
\rmS_a+\rmS_b
\right)^{\dagger}.
\end{equation}
where $\rmS_r := \sum_{j \in \mathcal{Q}_r} (\rmQ^j)^\top(\rmQ^j)$, and $\dagger$ denotes the Moore--Penrose pseudoinverse.
\end{lemma}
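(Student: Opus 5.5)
The plan is to rewrite $\mathcal{L}_{a,b}$ as a convex quadratic in $\rmM$, write down its first-order optimality condition, and check that the proposed $\hat{\rmW}_K(a,b)$ satisfies it. Convexity then makes any stationary point a global minimizer.

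\textbf{Step 1: trace form.} Write $\rmG := \rmX^\top\rmX$. For a fixed query head $j\in\mathcal{Q}_a$ we have $\rmQ^j(\rmX\rmM)^\top-\rmQ^j(\rmX\rmW_K^a)^\top=\rmQ^j(\rmM-\rmW_K^a)^\top\rmX^\top$. Using $\lVert \rmB\rVert_{\mathrm F}^2=\operatorname{tr}(\rmB^\top\rmB)$ and cyclicity of the trace, its squared norm equals
\[
\operatorname{tr}\!\big(\rmG(\rmM-\rmW_K^a)(\rmQ^j)^\top\rmQ^j(\rmM-\rmW_K^a)^\top\big).
\]
Summing over $j\in\mathcal{Q}_a$ and $j\in\mathcal{Q}_b$ gives
\[
\mathcal{L}_{a,b}(\rmM)=\operatorname{tr}\!\big(\rmG(\rmM-\rmW_K^a)\rmS_a(\rmM-\rmW_K^a)^\top\big)+\operatorname{tr}\!\big(\rmG(\rmM-\rmW_K^b)\rmS_b(\rmM-\rmW_K^b)^\top\big).
\]
Each term has the form $\lVert \rmX\Delta\rmC^\top\rVert_{\mathrm F}^2$ with $\Delta$ affine in $\rmM$, where $\rmC$ is the stacked query matrices. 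So each term is a composition of a convex function with an affine map, and $\mathcal{L}_{a,b}$ is convex in $\rmM$.

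\textbf{Step 2: stationarity.} Differentiating (with $\rmG$, $\rmS_a$, $\rmS_b$ symmetric) gives
\[
\nabla_{\rmM}\mathcal{L}_{a,b}=2\rmG\big[(\rmM-\rmW_K^a)\rmS_a+(\rmM-\rmW_K^b)\rmS_b\big].
\]
Hence any $\rmM$ solving the normal equation
\[
\rmM(\rmS_a+\rmS_b)=\rmW_K^a\rmS_a+\rmW_K^b\rmS_b
\]
is stationary, whether or not $\rmG$ is invertible. By convexity, such an $\rmM$ is a global minimizer. This is why the lemma says "an" optimal projection: when $\rmG$ or $\rmS_a+\rmS_b$ is singular, the minimizer need not be unique.

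\textbf{Step 3: the pseudoinverse solves the normal equation (main obstacle).} Set $\rmM=\hat{\rmW}_K(a,b)$. Then
\[
\rmM(\rmS_a+\rmS_b)=(\rmW_K^a\rmS_a+\rmW_K^b\rmS_b)\rmP,
\]
where $\rmP:=(\rmS_a+\rmS_b)^\dagger(\rmS_a+\rmS_b)$ is the orthogonal projector onto $\operatorname{range}(\rmS_a+\rmS_b)$. It remains to show $\rmS_a\rmP=\rmS_a$ and $\rmS_b\rmP=\rmS_b$.

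\begin{itemize}
\item Since $\rmS_a,\rmS_b\succeq 0$, a vector $v$ with $(\rmS_a+\rmS_b)v=0$ satisfies $v^\top\rmS_av+v^\top\rmS_bv=0$. Both terms are nonnegative, so both vanish.
\item For a PSD matrix, $v^\top\rmS_av=0$ implies $\rmS_av=0$. Therefore $\ker(\rmS_a+\rmS_b)=\ker\rmS_a\cap\ker\rmS_b$.
\item Taking orthogonal complements of symmetric matrices gives $\operatorname{range}(\rmS_a)\subseteq\operatorname{range}(\rmS_a+\rmS_b)$. Hence $\rmP\rmS_a=\rmS_a$, and transposing yields $\rmS_a\rmP=\rmS_a$. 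The same argument applies to $\rmS_b$.
\end{itemize}

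So the normal equation holds exactly, and the claim follows. This range-inclusion step is the only delicate point, because it is what makes the pseudoinverse, rather than an inverse, legitimate in degenerate cases.
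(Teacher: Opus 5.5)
Your proposal is correct and follows essentially the same route as the paper. Both arguments rewrite $\mathcal{L}_{a,b}$ in trace form, use convexity and the gradient $2\rmX^\top\rmX[(\rmM-\rmW_K^a)\rmS_a+(\rmM-\rmW_K^b)\rmS_b]$ to reduce to the normal equation $\rmM(\rmS_a+\rmS_b)=\rmW_K^a\rmS_a+\rmW_K^b\rmS_b$, and then check that the pseudoinverse solution satisfies it. Your kernel argument for $\operatorname{range}(\rmS_a)\subseteq\operatorname{range}(\rmS_a+\rmS_b)$ simply spells out the step the paper states in one line, namely that the rows of $\rmW_K^a\rmS_a+\rmW_K^b\rmS_b$ lie in the row space of $\rmS_a+\rmS_b$.
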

The proof of Lemma~\ref{lem:pair_logit_merge} is given in \cref{app:logit_merge}. 
Let $G = (V, E, w)$ be the complete weighted graph with vertex set $V = \{1, \ldots, 2h_k^S\}$, where for each pair of teacher key heads $\{a,b\}$, $E$ contains an edge with weight
\begin{equation}
w(a,b) := \mathcal{L}_{a,b}(\hat{\rmW}_K(a,b))
=
\min_{\rmM} \mathcal{L}_{a,b}(\rmM).
\end{equation}
where the second equality follows from Lemma~\ref{lem:pair_logit_merge}.

\begin{theorem}
\label{thm:merge_theorem}
A minimum-weight perfect matching of $G$ yields an optimal solution to \Cref{eq:merge_objective}, with $\sigma^*$ being a permutation whose consecutive pairs are the matched edges, and for each matched pair $(\sigma_{2i-1}^{*}, \sigma_{2i}^{*})$ the corresponding optimal merged key projection is
\begin{equation}
\hat{\rmW}_K^i
=
\hat{\rmW}_K(\sigma_{2i-1}^*, \sigma_{2i}^*).
\end{equation}
\end{theorem}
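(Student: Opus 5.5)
The plan is to reduce the nested minimization in \Cref{eq:merge_objective} to a purely combinatorial problem over pairings, and then to identify that problem with minimum-weight perfect matching on $G$. The key structural fact is that the objective is \emph{separable}. Once the permutation $\sigma$ is fixed, the $i$-th summand $\mathcal{L}_{\sigma_{2i-1},\sigma_{2i}}(\rmM^i)$ depends only on $\rmM^i$, and the matrices $\rmM^1,\ldots,\rmM^{h_k^S}$ are independent, unconstrained variables. So the inner minimum of a sum of functions of disjoint variables equals the sum of the individual minima:
\begin{equation*}
\min_{\rmM^{[1:h_k^S]}} \sum_{i=1}^{h_k^S} \mathcal{L}_{\sigma_{2i-1},\sigma_{2i}}(\rmM^i) = \sum_{i=1}^{h_k^S} \min_{\rmM} \mathcal{L}_{\sigma_{2i-1},\sigma_{2i}}(\rmM) = \sum_{i=1}^{h_k^S} w(\sigma_{2i-1},\sigma_{2i}).
\end{equation*}
The last equality uses the definition of $w$ together with Lemma~\ref{lem:pair_logit_merge}. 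In particular, the inner minimum is attained at $\rmM^i=\hat{\rmW}_K(\sigma_{2i-1},\sigma_{2i})$, since the lemma guarantees a minimizer exists. I will also note that $\mathcal{L}_{a,b}$ is symmetric in $(a,b)$, so $w$ is well defined on unordered edges.

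The next step is to relate permutations to perfect matchings. Each permutation $\sigma$ induces the edge set $\{\{\sigma_{2i-1},\sigma_{2i}\}: i=1,\ldots,h_k^S\}$. This set is a perfect matching of the complete graph $G$ on $2h_k^S$ vertices, because every vertex appears in exactly one pair. Conversely, every perfect matching $\mathcal{M}$ arises from some $\sigma$: list the matched edges in any order and write each edge's endpoints consecutively. Under this correspondence, the reduced objective $\sum_i w(\sigma_{2i-1},\sigma_{2i})$ is exactly the total weight $w(\mathcal{M})$. Minimizing over $\sigma$ is therefore the same as minimizing $w(\mathcal{M})$ over perfect matchings, since the surjective map $\sigma\mapsto\mathcal{M}$ preserves the objective value.

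To finish, let $\mathcal{M}^*$ be a minimum-weight perfect matching and $\sigma^*$ any permutation listing its edges. For an arbitrary feasible pair $(\sigma,\rmM^{[1:h_k^S]})$ I will chain the inequalities
\begin{equation*}
\sum_i \mathcal{L}_{\sigma_{2i-1},\sigma_{2i}}(\rmM^i) \ge \sum_i w(\sigma_{2i-1},\sigma_{2i}) \ge w(\mathcal{M}^*) = \sum_i \mathcal{L}_{\sigma^*_{2i-1},\sigma^*_{2i}}\bigl(\hat{\rmW}_K(\sigma^*_{2i-1},\sigma^*_{2i})\bigr).
\end{equation*}
This shows that $\sigma^*$ with $\hat{\rmW}_K^i=\hat{\rmW}_K(\sigma^*_{2i-1},\sigma^*_{2i})$ attains the global minimum. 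There is no real obstacle; the only point requiring care is the separability step. It relies on the variables $\rmM^i$ being genuinely decoupled, with no shared constraint across merged heads, and on the existence of the minimizer supplied by Lemma~\ref{lem:pair_logit_merge}.
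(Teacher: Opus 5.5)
Your proposal is correct and follows essentially the same route as the paper. Both fix $\sigma$ and use separability of the inner problem together with Lemma~\ref{lem:pair_logit_merge} to reduce it to $\sum_i w(\sigma_{2i-1},\sigma_{2i})$, then identify the outer minimization with minimum-weight perfect matching through the permutation--matching correspondence; your explicit inequality chain and your remark that $\mathcal{L}_{a,b}$ is symmetric in $(a,b)$ (so $w$ is well defined on unordered edges) just make explicit details the paper leaves implicit.
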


\begin{proof}
For a fixed permutation $\sigma$, we look at the inner objective in \cref{eq:merge_objective}. This objective is a sum of $h_k^S$ independent pairwise terms, one for each pair $(\sigma_{2i-1},\sigma_{2i})$. Therefore
\begin{equation}
\min_{\rmM^{[1:h_k^S]}}
\sum_{i=1}^{h_k^S}
\mathcal{L}_{\sigma_{2i-1},\sigma_{2i}}(\rmM^i)
=
\sum_{i=1}^{h_k^S}
\min_{\rmM^i} \mathcal{L}_{\sigma_{2i-1},\sigma_{2i}}(\rmM^i)
=
\sum_{i=1}^{h_k^S}
w(\sigma_{2i-1},\sigma_{2i}),
\end{equation}
Moreover, for each $i$, the minimizing choice is $\hat{\rmW}_K^i = \hat{\rmW}_K(\sigma_{2i-1},\sigma_{2i})$. Thus the value of \cref{eq:merge_objective} for a given permutation $\sigma$ is exactly the total weight of the perfect matching encoded by its consecutive pairs. Any perfect matching of $G$ can be written as such a permutation, and conversely any such permutation defines a perfect matching. Minimizing \cref{eq:merge_objective} over $\sigma$ is therefore exactly the minimum-weight perfect matching problem on $G$.
\end{proof}

\begin{figure}[t]
  \centering
  \includegraphics[width=\textwidth]{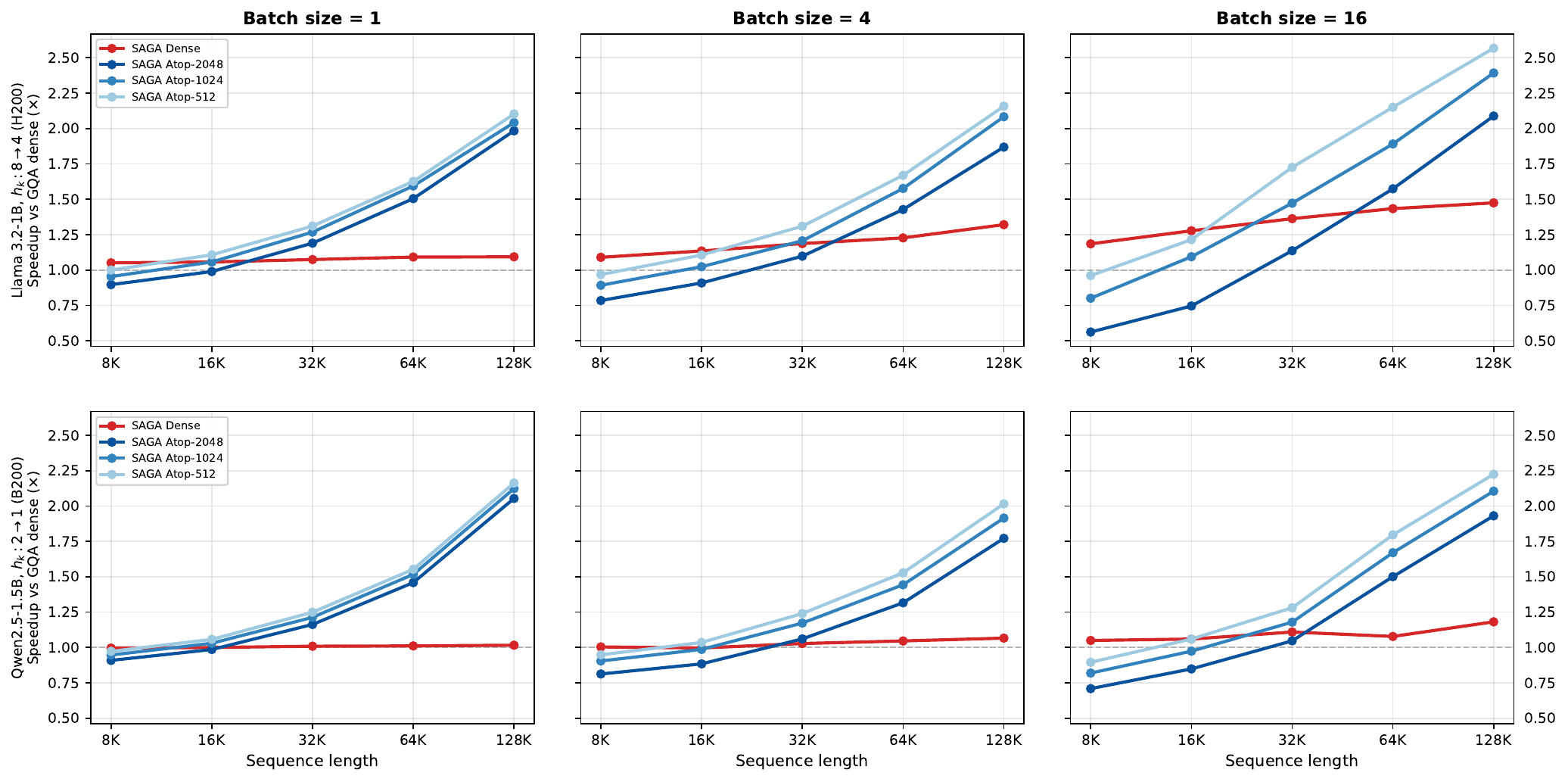}
  \caption{End-to-End speedup over baseline full attention as a function of sequence length, for Llama~3.2-1B ($h_k: 8 \to 4$, H200) and Qwen2.5-1.5B ($h_k: 2 \to 1$, B200) at various batch sizes. Merging alone (SAGA full) gives a consistent speedup across all configurations. Combining merging with \atopn{} attention adds overhead at short sequences but provides substantial gains at long contexts.}
  \label{fig:speedup}
\end{figure}

\section{Experiments}
\label{sec:experiments}

\subsection{Inference decoding speedups}
\label{sec:speedups}

We measure \emph{end-to-end} per-token decode latency on Llama~3.2-1B~\cite{llama3} ($h_k: 8 \to 4$) and Qwen2.5-1.5B~\cite{qwen2.5} ($h_k: 2 \to 1$) using compiled CUDA graphs. We compare the baseline model (original GQA head configuration) against the SAGA model (merged key heads), each evaluated with full attention and with \atopn{} attention at $N \in \{512, 1024, 2048\}$. We sweep sequence lengths from 8K to 128K at different batch sizes, reporting mean per-token latency over 10 runs of 32 generated tokens each.

\cref{fig:speedup} presents end-to-end speedup over baseline full attention. To showcase different GPU architectures, Llama is evaluated on H200 while Qwen is evaluated on the newer B200 (complementary results in \cref{app:additional-results}). Merging key heads alone (SAGA full) provides a modest but consistent speedup across all sequence lengths and batch sizes, reflecting the reduced query-key cost from fewer key heads (see \cref{app:additional-results} for a kernel-level profile, larger models and additional ablations). This benefit grows with batch size as queries compete for memory bandwidth: on Llama at batch size 16, SAGA full attention is up to $1.47\times$ faster than baseline full at 128K context.

Combining merged heads with \atopn{} attention introduces overhead at short sequences due to the threshold calibration and sparse gather, but yields substantial speedups at long contexts. At 128K with batch size 1, SAGA \atopn{} achieves up to a $2.2\times$ end-to-end speedup on Qwen and $2.1\times$ on Llama over baseline full attention. Even at batch size 16, where the \atopn{} overhead is larger, SAGA \atopn{} reaches $2.2\times$ and $2.6\times$ speedups at 128K on Qwen and Llama respectively. 

We note that flash attention \cite{dao2022flashattentionfastmemoryefficientexact} provides little benefit during autoregressive decode, where the attention computation is a matrix-vector product with no quadratic attention matrix to tile. Our full-attention baseline uses faster standard compiled attention, enabling portable support for the asymmetric GQA and ensuring a fair comparison when applying \atopn{} attention.

To assess the quality impact of the \atopn{} approximation, we evaluate all configurations on the full RULER benchmark~\cite{hsieh2024ruler} at context lengths of 8K, 16K, and 32K in \cref{sec:distillation} below. \cref{fig:ruler-accuracy} reports per-task accuracy. On Qwen2.5-1.5B, \atopn{} retains 83--97\% of full attention quality across all context lengths, with most tasks virtually unaffected. The degradation is concentrated in multi-key retrieval tasks, which require the model to simultaneously attend to many dispersed keys -- a setting where aggressive sparsity discards relevant entries.

\subsection{Training from scratch}
\label{sec:scratch}
\begin{figure}[t]
  \centering
  \begin{minipage}[t]{0.49\textwidth}
    \centering
    \includegraphics[width=\linewidth]{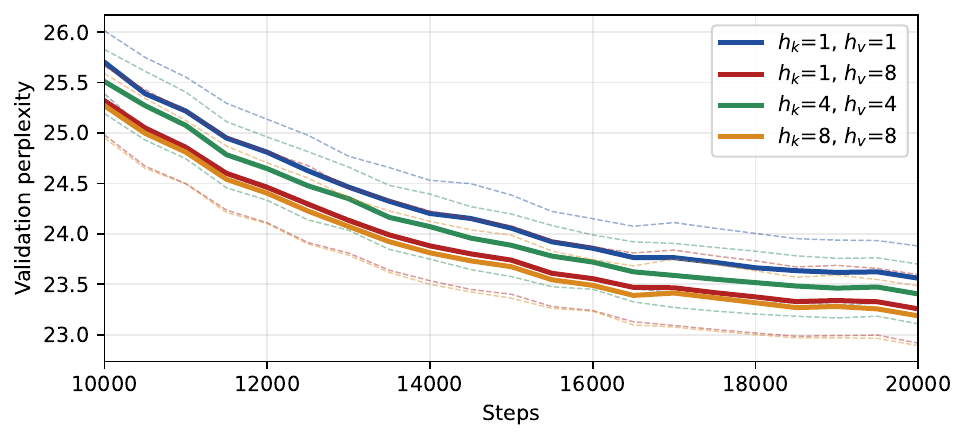}
  \end{minipage}
  \hfill
  \begin{minipage}[t]{0.49\textwidth}
    \centering
    \includegraphics[width=\linewidth]{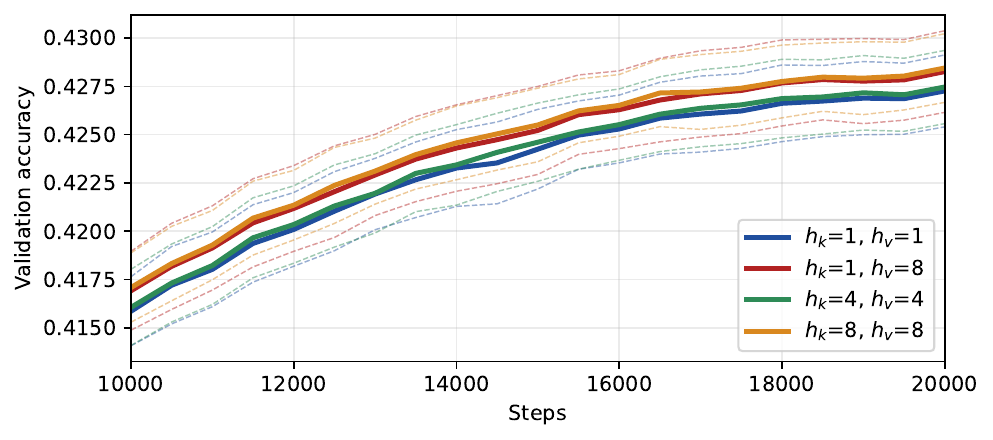}
  \end{minipage}
  \caption{Training from scratch on SmolLM2-360M with $h_q = 16$ query heads. \textbf{Left:} validation perplexity ($\downarrow$). \textbf{Right:} validation accuracy ($\uparrow$). SAGA $(h_k, h_v) = (1, 8)$ closely tracks the high-capacity baseline $(8, 8)$ while outperforming both the matched-key-budget $(1, 1)$ baseline, and the parameter-matched baseline $(4, 4)$. Solid curves are means of 3 seeds; dotted lines indicate $\pm 1$ std.}
  \label{fig:scratch-eval}
\end{figure}
We train SAGA from scratch on the SmolLM2-360M~\citep{allal2025smollm2} backbone, leaving the model unchanged except for the head counts. The number of query heads is fixed to $h_q = 16$, while $h_k$ and $h_v$ are varied. SAGA with $(h_k, h_v) = (1, 8)$ is compared against three GQA baselines: high-capacity with $(h_k, h_v) = (8, 8)$, low-capacity with $(h_k, h_v) = (1, 1)$ matching the key budget of SAGA, and parameter-matched with $(h_k, h_v) = (4, 4)$ roughly matching the parameter count of SAGA. This isolates the contribution of decoupling $h_v$ from $h_k$. Each configuration is run with three seeds. \cref{app:additional-from-scratch} shows that a larger-scale Qwen3-0.6B experiment trained on 50B tokens exhibits the same trend.

\cref{fig:scratch-eval} shows that SAGA $(1, 8)$ closely matches the quality of the $(8, 8)$ baseline despite using a little over half the KV cache of $(8, 8)$. It also clearly outperforms the $(1, 1)$ baseline as well as the $(4, 4)$ baseline despite having a similar number of parameters. This result suggests that the improvement comes from allocating capacity asymmetrically. The same holds for how the key budget itself is spent: a Sigma-style variant~\cite{lin2025sigma} that reduces the key-head dimension instead of reducing the number of heads, $(h_k, d_k) = (2, 32)$, matches SAGA's $(1, 64)$ in KV-cache size and nearly in parameter count (both with $h_v = 8$, $d_v = 64$), yet reaches 41.7\% final validation accuracy versus 42.7\% for SAGA. This is consistent with the results of the SIGMA paper, who favored reducing the number of key heads while leaving the key-head dimension unchanged. 
Combined with \atopn{} attention at inference, this translates to the substantial decoding speedups reported in \cref{sec:speedups} at a small cost in quality. Our from-scratch experiments are necessarily limited in scale, and we hope future work will validate SAGA's benefits in a large-scale pretraining setting.

\subsection{Application to pretrained models}
\label{sec:distillation}

To evaluate SAGA on models with realistic capabilities, we convert pretrained models to the SAGA architecture via knowledge distillation, avoiding the compute and data requirements of training a competitive LLM from scratch. We start from Qwen2.5-1.5B~\cite{qwen2.5}, a highly capable model with only $h_k = 2$ key heads, where the matching from \cref{sec:gqa_to_saga} is trivial (a single pair) and reduces to averaging ($h_k: 2 \to 1$). We then follow up with Llama~3.2-1B~\citep{llama3}, using the full matching-based initialization from \cref{sec:gqa_to_saga} to merge 8 key heads into 4 (see \cref{app:merging-comparison} for a comparison of merging strategies).

Ideally, a fair comparison would increase the student's capacity to compensate for the reduced key heads, for example by increasing the head dimension or adding value heads. However, as teacher and student representations diverge, a short distillation cannot effectively leverage the teacher's signal. For simplicity we keep the student architecture identical to the teacher except for the reduced $h_k$, and accept the slight quality degradation from this conservative approach.

\begin{figure}[t]
  \centering
  \includegraphics[width=\textwidth]{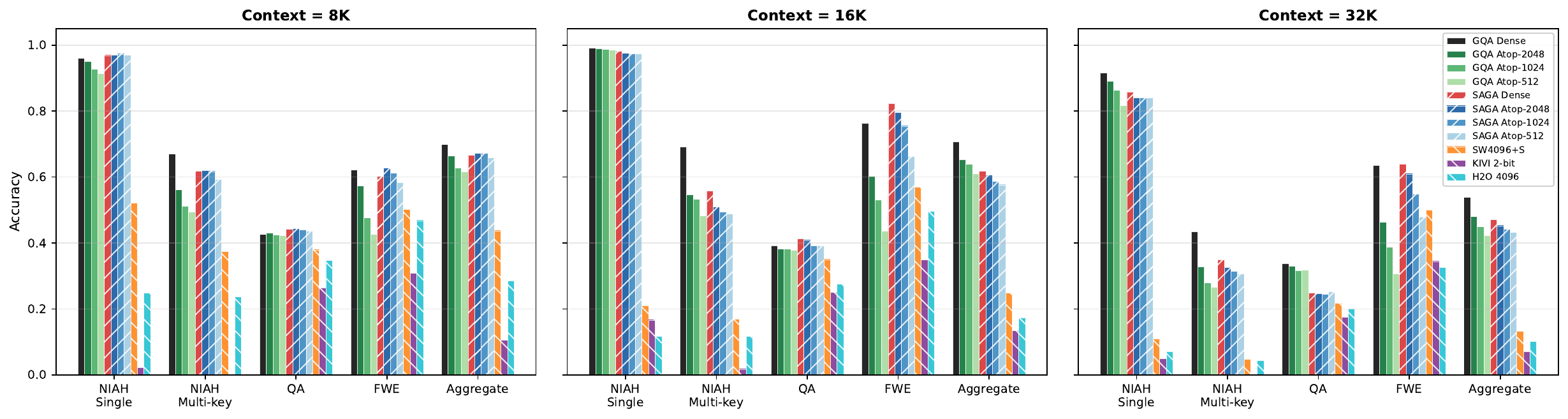}
  \caption{RULER accuracy for full attention and \atopn{} attention at $N \in \{512, 1024, 2048\}$, across context lengths 8K, 16K, and 32K on Qwen2.5-1.5B, for different task categories. SAGA with \atopn{} remains near baseline quality, while alternative eviction-based methods degrade substantially at longer contexts.}
  \label{fig:ruler-accuracy}
\end{figure}

The student is initialized by merging the teacher's key projection heads and trained with a combination of KL divergence from the teacher and hard-label cross-entropy. We train in several stages to expose the student to increasing context lengths while keeping total training short at only around 23K iterations (architecture dependent). We use a mix of SlimPajama~\citep{slimpajama}, QA~\cite{yang2018hotpotqa, rajpurkar2018know} and long-context data~\cite{bai2024longalign, chen2024longlora}, as well as synthetic needle-in-a-haystack examples. We note that the student is heavily weighted to match the teacher's output probabilities on these sequences, so its retained performance reflects architectural capacity rather than task-specific adaptation. Full hyperparameters and data mix details are provided in \cref{app:distillation}. 

The distilled student retains 96.3\% of the teacher's aggregate RULER score across 8K, 16K, and 32K context lengths, with per-task results shown in \cref{fig:ruler-accuracy}. A complementary Llama 3.2-1B comparison is shown in \cref{app:additional-results}. As shown in \cref{tab:distillation-quality}, the SAGA student remains close to the teacher on general-capability benchmarks despite the reduced key head capacity. For comparison, we evaluate alternative decode-acceleration methods: sliding window with attention sinks (SW+S)~\cite{xiao2024efficientstreaminglanguagemodels}, H2O~\cite{zhang2023h2o}, and KIVI~\cite{liu2024kivi}. These methods fall below 50\% aggregate RULER accuracy at 8K context and degrade rapidly at longer contexts, despite the 4096-token budget for sliding window and H2O. We note that SAGA could readily combine with KV cache quantization and eviction methods, such as those used for comparison above.

Furthermore, to experiment at a larger scale, we evaluate Llama3.1-70B-Instruct \citep{llama3} with AtopN only. Since AtopN requires no retraining, these experiments remain feasible with limited computational resources. To highlight that the AtopN approximation holds over long generations, we evaluate on tasks that elicit extended outputs before a final answer: GPQA Main, limited to 8K output tokens, and GPQA Diamond, limited to 16K output tokens \citep{gpqa}. Both consist of graduate-level questions spanning multiple domains, however Diamond is a harder subset. In practice, the model produces long generations on these tasks, averaging over 1.2K output tokens on GPQA Main and 2.1K output tokens on GPQA Diamond. We experiment with $N \in \{2048, 1024, 512\}$ and find that GPQA Main accuracy is unharmed even at $N=512$, whereas GPQA Diamond incurs some degradation. Results are available in Table \ref{tab:atopn_gpqa}

\begin{table}[t]
  \centering
  \caption{General-capability benchmarks for the teacher and SAGA student models.}
  \label{tab:distillation-quality}
  \begin{tabular}{lcccc}
    \toprule
    & \multicolumn{2}{c}{Qwen2.5-1.5B ($h_k: 2 \to 1$)} & \multicolumn{2}{c}{Llama~3.2-1B ($h_k: 8 \to 4$)} \\
    \cmidrule(lr){2-3} \cmidrule(lr){4-5}
    & Baseline & SAGA & Baseline & SAGA \\
    \midrule
    HellaSwag (acc\_norm $\uparrow$) & 0.678 & 0.644 & 0.643 & 0.583 \\
    WikiText (word ppl $\downarrow$) & 12.10 & 12.43 & 11.98 & 13.09 \\
    \bottomrule
  \end{tabular}
\end{table}

\begin{table}[t]
\centering
\caption{Llama3.1-70B-Instruct accuracy on GPQA with AtopN for varying $N$. GPQA Main is limited to 8K output tokens and GPQA Diamond to 16K.}
\label{tab:atopn_gpqa}
\begin{tabular}{lcc}
\toprule
$N$ & GPQA Diamond @16K ($\uparrow$) & GPQA Main @8K ($\uparrow$) \\
\midrule
Baseline & 0.4343 & 0.4241 \\
2048     & 0.3939 & 0.4129 \\
1024     & 0.3838 & 0.4330 \\
512      & 0.3586 & 0.4241 \\
\bottomrule
\end{tabular}
\end{table}

\section{Limitations, discussion and conclusion}

We introduced SAGA, an attention architecture designed around the computational trade-offs of sparse decoding. By decoupling key and value head counts, SAGA directly addresses the query-key bottleneck that dominates under sparse attention, while preserving model capacity through additional value heads. Our experiments show that SAGA nearly matches equivalent GQA models when trained from scratch and retains 96.3\% of the teacher's aggregate RULER score when distilled from pretrained checkpoints, while delivering substantial inference end-to-end speedups of more than $2\times$ at long contexts.

Several limitations point to promising future directions. First, our current implementation relies on the simple \atopn{} method; integrating hardware-aware sparse attention kernels could yield further speedup gains. Second, our distilled SAGA models maintain the same architecture as the teacher aside from the reduced key projections --- future work could compensate for this slight capacity reduction by reallocating parameters, for example by increasing the value head count or dimension. Finally, our quality experiments are limited to models of up to 1.5B parameters; validating SAGA's benefits in a full-scale pretraining setting remains an important open question. More broadly, we believe that co-designing attention architectures with their inference-time sparsity patterns is a fertile direction, and hope that SAGA serves as a step toward making this a standard consideration in LLM design.

\begin{ack}

The research of DS was funded by the European Union (ERC, A-B-C-Deep, 101039436). Views and opinions expressed are however those of the author only and do not necessarily reflect those of the European Union or the European Research Council Executive Agency (ERCEA). Neither the European Union nor the granting authority can be held responsible for them. The research of DS was also supported by the Council for Higher Education in Israel under the Moonshot Project. 

\end{ack}

\clearpage

\bibliographystyle{unsrtnat}
\bibliography{refs}

@article{li2024snapkv,
  title={Snapkv: Llm knows what you are looking for before generation},
  author={Li, Yuhong and Huang, Yingbing and Yang, Bowen and Venkitesh, Bharat and Locatelli, Acyr and Ye, Hanchen and Cai, Tianle and Lewis, Patrick and Chen, Deming},
  journal={Advances in Neural Information Processing Systems},
  volume={37},
  pages={22947--22970},
  year={2024}
}

@article{vaswani2017attention,
  title={Attention is all you need},
  author={Vaswani, Ashish and Shazeer, Noam and Parmar, Niki and Uszkoreit, Jakob and Jones, Llion and Gomez, Aidan N and Kaiser, {\L}ukasz and Polosukhin, Illia},
  journal={Advances in neural information processing systems},
  volume={30},
  year={2017}
}

@article{pope2023efficiently,
  title={Efficiently scaling transformer inference},
  author={Pope, Reiner and Douglas, Sholto and Chowdhery, Aakanksha and Devlin, Jacob and Bradbury, James and Heek, Jonathan and Xiao, Kefan and Agrawal, Shivani and Dean, Jeff},
  journal={Proceedings of Machine Learning and Systems},
  volume={5},
  pages={606--624},
  year={2023}
}

@misc{qwen2.5,
    title = {Qwen2.5: A Party of Foundation Models},
    url = {https://qwenlm.github.io/blog/qwen2.5/},
    author = {Qwen Team},
    month = {September},
    year = {2024}
}

@article{qwen2,
      title={Qwen2 Technical Report}, 
      author={An Yang and Baosong Yang and Binyuan Hui and Bo Zheng and Bowen Yu and Chang Zhou and Chengpeng Li and Chengyuan Li and Dayiheng Liu and Fei Huang and Guanting Dong and Haoran Wei and Huan Lin and Jialong Tang and Jialin Wang and Jian Yang and Jianhong Tu and Jianwei Zhang and Jianxin Ma and Jin Xu and Jingren Zhou and Jinze Bai and Jinzheng He and Junyang Lin and Kai Dang and Keming Lu and Keqin Chen and Kexin Yang and Mei Li and Mingfeng Xue and Na Ni and Pei Zhang and Peng Wang and Ru Peng and Rui Men and Ruize Gao and Runji Lin and Shijie Wang and Shuai Bai and Sinan Tan and Tianhang Zhu and Tianhao Li and Tianyu Liu and Wenbin Ge and Xiaodong Deng and Xiaohuan Zhou and Xingzhang Ren and Xinyu Zhang and Xipin Wei and Xuancheng Ren and Yang Fan and Yang Yao and Yichang Zhang and Yu Wan and Yunfei Chu and Yuqiong Liu and Zeyu Cui and Zhenru Zhang and Zhihao Fan},
      journal={arXiv preprint arXiv:2407.10671},
      year={2024}
}

@article{gupta2021memory,
  title={Memory-efficient Transformers via Top-$ k $ Attention},
  author={Gupta, Ankit and Dar, Guy and Goodman, Shaya and Ciprut, David and Berant, Jonathan},
  journal={arXiv preprint arXiv:2106.06899},
  year={2021}
}

@inproceedings{wang2022kvt,
  title={Kvt: k-nn attention for boosting vision transformers},
  author={Wang, Pichao and Wang, Xue and Wang, Fan and Lin, Ming and Chang, Shuning and Li, Hao and Jin, Rong},
  booktitle={European conference on computer vision},
  pages={285--302},
  year={2022},
  organization={Springer}
}

@article{wu2022memorizing,
  title={Memorizing transformers},
  author={Wu, Yuhuai and Rabe, Markus N and Hutchins, DeLesley and Szegedy, Christian},
  journal={arXiv preprint arXiv:2203.08913},
  year={2022}
}

@article{bertsch2023unlimiformer,
  title={Unlimiformer: Long-range transformers with unlimited length input},
  author={Bertsch, Amanda and Alon, Uri and Neubig, Graham and Gormley, Matthew},
  journal={Advances in Neural Information Processing Systems},
  volume={36},
  pages={35522--35543},
  year={2023}
}

@article{haris2024k,
  title={$ k $ NN Attention Demystified: A Theoretical Exploration for Scalable Transformers},
  author={Haris, Themistoklis},
  journal={arXiv preprint arXiv:2411.04013},
  year={2024}
}

@article{lou2024sparser,
  title={Sparser is faster and less is more: Efficient sparse attention for long-range transformers},
  author={Lou, Chao and Jia, Zixia and Zheng, Zilong and Tu, Kewei},
  journal={arXiv preprint arXiv:2406.16747},
  year={2024}
}

@article{hsieh2024ruler,
  title={RULER: What's the Real Context Size of Your Long-Context Language Models?},
  author={Hsieh, Cheng-Ping and Sun, Simeng and Kriman, Samuel and Acharya, Shantanu and Rekesh, Dima and Jia, Fei and Zhang, Yang and Ginsburg, Boris},
  journal={arXiv preprint arXiv:2404.06654},
  year={2024}
}

@misc{dao2022flashattentionfastmemoryefficientexact,
      title={FlashAttention: Fast and Memory-Efficient Exact Attention with IO-Awareness}, 
      author={Tri Dao and Daniel Y. Fu and Stefano Ermon and Atri Rudra and Christopher Ré},
      year={2022},
      eprint={2205.14135},
      archivePrefix={arXiv},
      primaryClass={cs.LG},
      url={https://arxiv.org/abs/2205.14135}, 
}

@misc{xiao2024efficientstreaminglanguagemodels,
      title={Efficient Streaming Language Models with Attention Sinks}, 
      author={Guangxuan Xiao and Yuandong Tian and Beidi Chen and Song Han and Mike Lewis},
      year={2024},
      eprint={2309.17453},
      archivePrefix={arXiv},
      primaryClass={cs.CL},
      url={https://arxiv.org/abs/2309.17453}, 
}

@misc{kitaev2020reformerefficienttransformer,
      title={Reformer: The Efficient Transformer}, 
      author={Nikita Kitaev and Łukasz Kaiser and Anselm Levskaya},
      year={2020},
      eprint={2001.04451},
      archivePrefix={arXiv},
      primaryClass={cs.LG},
      url={https://arxiv.org/abs/2001.04451}, 
}

@article{radford2019language,
  title={Language models are unsupervised multitask learners},
  author={Radford, Alec and Wu, Jeffrey and Child, Rewon and Luan, David and Amodei, Dario and Sutskever, Ilya and others},
  journal={OpenAI blog},
  volume={1},
  number={8},
  pages={9},
  year={2019}
}

@article{brown2020language,
  title={Language models are few-shot learners},
  author={Brown, Tom and Mann, Benjamin and Ryder, Nick and Subbiah, Melanie and Kaplan, Jared D and Dhariwal, Prafulla and Neelakantan, Arvind and Shyam, Pranav and Sastry, Girish and Askell, Amanda and others},
  journal={Advances in neural information processing systems},
  volume={33},
  pages={1877--1901},
  year={2020}
}

@article{touvron2023llama,
  title={Llama: Open and efficient foundation language models},
  author={Touvron, Hugo and Lavril, Thibaut and Izacard, Gautier and Martinet, Xavier and Lachaux, Marie-Anne and Lacroix, Timoth{\'e}e and Rozi{\`e}re, Baptiste and Goyal, Naman and Hambro, Eric and Azhar, Faisal and others},
  journal={arXiv preprint arXiv:2302.13971},
  year={2023}
}

@article{wang2019superglue,
  title={Superglue: A stickier benchmark for general-purpose language understanding systems},
  author={Wang, Alex and Pruksachatkun, Yada and Nangia, Nikita and Singh, Amanpreet and Michael, Julian and Hill, Felix and Levy, Omer and Bowman, Samuel},
  journal={Advances in neural information processing systems},
  volume={32},
  year={2019}
}

@inproceedings{papineni2002bleu,
  title={Bleu: a method for automatic evaluation of machine translation},
  author={Papineni, Kishore and Roukos, Salim and Ward, Todd and Zhu, Wei-Jing},
  booktitle={Proceedings of the 40th annual meeting of the Association for Computational Linguistics},
  pages={311--318},
  year={2002}
}

@article{wei2022chain,
  title={Chain-of-thought prompting elicits reasoning in large language models},
  author={Wei, Jason and Wang, Xuezhi and Schuurmans, Dale and Bosma, Maarten and Xia, Fei and Chi, Ed and Le, Quoc V and Zhou, Denny and others},
  journal={Advances in neural information processing systems},
  volume={35},
  pages={24824--24837},
  year={2022}
}

@inproceedings{shen2021efficient,
  title={Efficient attention: Attention with linear complexities},
  author={Shen, Zhuoran and Zhang, Mingyuan and Zhao, Haiyu and Yi, Shuai and Li, Hongsheng},
  booktitle={Proceedings of the IEEE/CVF winter conference on applications of computer vision},
  pages={3531--3539},
  year={2021}
}

@inproceedings{keles2023computational,
  title={On the computational complexity of self-attention},
  author={Keles, Feyza Duman and Wijewardena, Pruthuvi Mahesakya and Hegde, Chinmay},
  booktitle={International conference on algorithmic learning theory},
  pages={597--619},
  year={2023},
  organization={PMLR}
}

@article{ainslie2023gqa,
  title={Gqa: Training generalized multi-query transformer models from multi-head checkpoints},
  author={Ainslie, Joshua and Lee-Thorp, James and De Jong, Michiel and Zemlyanskiy, Yury and Lebr{\'o}n, Federico and Sanghai, Sumit},
  journal={arXiv preprint arXiv:2305.13245},
  year={2023}
}

@article{zaheer2020big,
  title={Big bird: Transformers for longer sequences},
  author={Zaheer, Manzil and Guruganesh, Guru and Dubey, Kumar Avinava and Ainslie, Joshua and Alberti, Chris and Ontanon, Santiago and Pham, Philip and Ravula, Anirudh and Wang, Qifan and Yang, Li and others},
  journal={Advances in neural information processing systems},
  volume={33},
  pages={17283--17297},
  year={2020}
}

@article{beltagy2020longformer,
  title={Longformer: The long-document transformer},
  author={Beltagy, Iz and Peters, Matthew E and Cohan, Arman},
  journal={arXiv preprint arXiv:2004.05150},
  year={2020}
}

@article{child2019generating,
  title={Generating long sequences with sparse transformers},
  author={Child, Rewon},
  journal={arXiv preprint arXiv:1904.10509},
  year={2019}
}

@article{zhang2023h2o,
  title={H2o: Heavy-hitter oracle for efficient generative inference of large language models},
  author={Zhang, Zhenyu and Sheng, Ying and Zhou, Tianyi and Chen, Tianlong and Zheng, Lianmin and Cai, Ruisi and Song, Zhao and Tian, Yuandong and R{\'e}, Christopher and Barrett, Clark and others},
  journal={Advances in Neural Information Processing Systems},
  volume={36},
  pages={34661--34710},
  year={2023}
}

@article{tang2024quest,
  title={Quest: Query-aware sparsity for efficient long-context llm inference},
  author={Tang, Jiaming and Zhao, Yilong and Zhu, Kan and Xiao, Guangxuan and Kasikci, Baris and Han, Song},
  journal={arXiv preprint arXiv:2406.10774},
  year={2024}
}

@article{liu2024minicache,
  title={Minicache: Kv cache compression in depth dimension for large language models},
  author={Liu, Akide and Liu, Jing and Pan, Zizheng and He, Yefei and Haffari, Gholamreza and Zhuang, Bohan},
  journal={Advances in Neural Information Processing Systems},
  volume={37},
  pages={139997--140031},
  year={2024}
}

@inproceedings{sheng2023flexgen,
  title={Flexgen: High-throughput generative inference of large language models with a single gpu},
  author={Sheng, Ying and Zheng, Lianmin and Yuan, Binhang and Li, Zhuohan and Ryabinin, Max and Chen, Beidi and Liang, Percy and R{\'e}, Christopher and Stoica, Ion and Zhang, Ce},
  booktitle={International Conference on Machine Learning},
  pages={31094--31116},
  year={2023},
  organization={PMLR}
}

@inproceedings{zhangspargeattention,
  title={SpargeAttention: Accurate and Training-free Sparse Attention Accelerating Any Model Inference},
  author={Zhang, Jintao and Xiang, Chendong and Huang, Haofeng and Xi, Haocheng and Zhu, Jun and Chen, Jianfei and others},
  booktitle={Forty-second International Conference on Machine Learning},
  year={2025}
}

@article{yuan2025blasst,
  title={BLASST: Dynamic BLocked Attention Sparsity via Softmax Thresholding},
  author={Yuan, Jiayi and Shinn, Cameron and Xu, Kai and Cui, Jingze and Klimiashvili, George and Xiao, Guangxuan and Zheng, Perkz and Li, Bo and Zhou, Yuxin and Ye, Zhouhai and others},
  journal={arXiv preprint arXiv:2512.12087},
  year={2025}
}

@article{ohayon2025block,
  title={Block Sparse Flash Attention},
  author={Ohayon, Daniel and Lamprecht, Itay and Hubara, Itay and Cohen, Israel and Soudry, Daniel and Elata, Noam},
  journal={arXiv preprint arXiv:2512.07011},
  year={2025}
}

@misc{slimpajama,
  author = {Soboleva, Daria and Al-Khateeb, Faisal and Myers, Robert and Steeves, Jacob R. and Hestness, Joel and Dey, Nolan},
  title = {SlimPajama: A 627B token cleaned and deduplicated version of RedPajama},
  month = jun,
  year = {2023},
  howpublished = {\url{https://www.cerebras.ai/blog/slimpajama-a-627b-token-cleaned-and-deduplicated-version-of-redpajama}},
  url = {https://huggingface.co/datasets/cerebras/SlimPajama-627B}
}

@inproceedings{loshchilov2019decoupled,
  title={Decoupled Weight Decay Regularization},
  author={Loshchilov, Ilya and Hutter, Frank},
  booktitle={International Conference on Learning Representations},
  year={2019}
}

@article{allal2025smollm2,
  title={{SmolLM2}: When Smol Goes Big -- Data-Centric Training of a Small Language Model},
  author={Allal, Loubna Ben and Lozhkov, Anton and Bakouch, Elie and Penedo, Guilherme and Tazi, Nouamane and Tunstall, Lewis and others},
  journal={arXiv preprint arXiv:2502.02737},
  year={2025}
}

@inproceedings{liu2024kivi,
  title={{KIVI}: A Tuning-Free Asymmetric 2bit Quantization for {KV} Cache},
  author={Liu, Zirui and Yuan, Jiayi and Jin, Hongye and Zhong, Shaochen and Xu, Zhaozhuo and Braverman, Vladimir and Chen, Beidi and Hu, Xia},
  booktitle={Proceedings of the 41st International Conference on Machine Learning},
  year={2024},
  eprint={2402.02750},
  archivePrefix={arXiv},
  primaryClass={cs.CL}
}

@inproceedings{chen2024dha,
  title={{DHA}: Learning Decoupled-Head Attention from Transformer Checkpoints via Adaptive Heads Fusion},
  author={Chen, Yilong and Zhang, Linhao and Shang, Junyuan and Zhang, Zhenyu and Liu, Tingwen and Wang, Shuohuan and Sun, Yu},
  booktitle={Advances in Neural Information Processing Systems},
  volume={37},
  year={2024}
}

@misc{llama3,
      title={The Llama 3 Herd of Models},
      author={Aaron Grattafiori and Abhimanyu Dubey and others},
      year={2024},
      eprint={2407.21783},
      archivePrefix={arXiv},
      primaryClass={cs.AI},
      url={https://arxiv.org/abs/2407.21783},
}

@inproceedings{yang2018hotpotqa,
  title={HotpotQA: A Dataset for Diverse, Explainable Multi-hop Question Answering},
  author={Yang, Zhilin and Qi, Peng and Zhang, Saizheng and Bengio, Yoshua and Cohen, William W. and Salakhutdinov, Ruslan and Manning, Christopher D.},
  booktitle={Proceedings of the 2018 Conference on Empirical Methods in Natural Language Processing},
  pages={2369--2380},
  year={2018},
  url={https://arxiv.org/abs/1809.09600},
}

@inproceedings{rajpurkar2018know,
  title={Know What You Don't Know: Unanswerable Questions for {SQuAD}},
  author={Rajpurkar, Pranav and Jia, Robin and Liang, Percy},
  booktitle={Proceedings of the 56th Annual Meeting of the Association for Computational Linguistics},
  pages={784--789},
  year={2018},
  url={https://arxiv.org/abs/1806.03822},
}

@misc{bai2024longalign,
      title={LongAlign: A Recipe for Long Context Alignment of Large Language Models},
      author={Yushi Bai and Xin Lv and Jiajie Zhang and Yuze He and Ji Qi and Lei Hou and Jie Tang and Yuxiao Dong and Juanzi Li},
      year={2024},
      eprint={2401.18058},
      archivePrefix={arXiv},
      primaryClass={cs.CL},
      url={https://arxiv.org/abs/2401.18058},
}

@misc{chen2024longlora,
      title={LongLoRA: Efficient Fine-tuning of Long-Context Large Language Models},
      author={Yukang Chen and Shengju Qian and Haotian Tang and Xin Lai and Zhijian Liu and Song Han and Jiaya Jia},
      year={2024},
      eprint={2309.12307},
      archivePrefix={arXiv},
      primaryClass={cs.CL},
      url={https://arxiv.org/abs/2309.12307},
}

@misc{yuan2025nsa,
      title={Native Sparse Attention: Hardware-Aligned and Natively Trainable Sparse Attention},
      author={Jingyang Yuan and Huazuo Gao and Damai Dai and Junyu Luo and Liang Zhao and Zhengyan Zhang and Zhenda Xie and Y. X. Wei and Lean Wang and Zhiping Xiao and Yuqing Wang and Chong Ruan and Ming Zhang and Wenfeng Liang and Wangding Zeng},
      year={2025},
      eprint={2502.11089},
      archivePrefix={arXiv},
      primaryClass={cs.CL},
      url={https://arxiv.org/abs/2502.11089},
}

@misc{lin2025sigma,
      title={Sigma: Differential Rescaling of Query, Key and Value for Efficient Language Models},
      author={Zhenghao Lin and Zihao Tang and Xiao Liu and Yeyun Gong and Yi Chang and Jian Jia and Jian Li and Yelong Shen and Chen Lin and Nan Duan and Weizhu Chen},
      year={2025},
      eprint={2501.13629},
      archivePrefix={arXiv},
      primaryClass={cs.CL},
      url={https://arxiv.org/abs/2501.13629},
}

@misc{davies2025liminalexploringfrontiersllm,
      title={LIMINAL: Exploring The Frontiers of LLM Decode Performance}, 
      author={Michael Davies and Neal Crago and Karthikeyan Sankaralingam and Christos Kozyrakis},
      year={2025},
      eprint={2507.14397},
      archivePrefix={arXiv},
      primaryClass={cs.AR},
      url={https://arxiv.org/abs/2507.14397}, 
}

@article{liu2024deepseek,
  title={Deepseek-v2: A strong, economical, and efficient mixture-of-experts language model},
  author={Liu, Aixin and Feng, Bei and Wang, Bin and Wang, Bingxuan and Liu, Bo and Zhao, Chenggang and Dengr, Chengqi and Ruan, Chong and Dai, Damai and Guo, Daya and others},
  journal={arXiv preprint arXiv:2405.04434},
  year={2024}
}

@misc{deepseekv32,
  title  = {{DeepSeek-V3.2-Exp}: Boosting Long-Context Efficiency with {DeepSeek} Sparse Attention},
  author = {{DeepSeek-AI}},
  year   = {2025},
  howpublished = {\url{https://github.com/deepseek-ai/DeepSeek-V3.2-Exp}}
}

@misc{gpqa,
      title={GPQA: A Graduate-Level Google-Proof Q\&A Benchmark}, 
      author={David Rein and Betty Li Hou and Asa Cooper Stickland and Jackson Petty and Richard Yuanzhe Pang and Julien Dirani and Julian Michael and Samuel R. Bowman},
      year={2023},
      eprint={2311.12022},
      archivePrefix={arXiv},
      primaryClass={cs.AI},
      url={https://arxiv.org/abs/2311.12022}, 
}


\clearpage
\appendix

\section{Proof of Lemma~\ref{lem:pair_logit_merge}}
\label{app:logit_merge}

Recall that $\mathcal{Q}_r$ denotes the query heads that share $\rmW_K^r$. Define the realized query matrices $\rmQ^j := \rmX\rmW_Q^j$, and fix a pair of teacher key heads $a,b \in \{1,\ldots,2h_k^S\}$. Recall the pairwise loss
\begin{equation}
\begin{aligned}
\mathcal{L}_{a,b}(\rmM)
:=
\sum_{j \in \mathcal{Q}_a}
&\left\lVert
\rmQ^j(\rmX\rmM)^\top
-
\rmQ^j(\rmX\rmW_K^a)^\top
\right\rVert_{\mathrm{F}}^2 \\
+
\sum_{j \in \mathcal{Q}_b}
&\left\lVert
\rmQ^j(\rmX\rmM)^\top
-
\rmQ^j(\rmX\rmW_K^b)^\top
\right\rVert_{\mathrm{F}}^2.
\end{aligned}
\end{equation}

\begin{proof}[Proof of Lemma~\ref{lem:pair_logit_merge}]
For any fixed $j \in \mathcal{Q}_a$, we have
\begin{equation}
\begin{aligned}
&\left\lVert
\rmQ^j(\rmX\rmM)^\top
-
\rmQ^j(\rmX\rmW_K^a)^\top
\right\rVert_{\mathrm{F}}^2
=
\left\lVert
\rmQ^j\big(\rmX(\rmM-\rmW_K^a)\big)^\top
\right\rVert_{\mathrm{F}}^2 \\
&=
\operatorname{tr}\!\left(
\rmQ^j\big(\rmX(\rmM-\rmW_K^a)\big)^\top
\rmX(\rmM-\rmW_K^a)
(\rmQ^j)^\top
\right) \\
&=
\operatorname{tr}\!\left(
(\rmM-\rmW_K^a)^\top
\rmX^\top \rmX
(\rmM-\rmW_K^a)
(\rmQ^j)^\top(\rmQ^j)
\right),
\end{aligned}
\end{equation}
where we used the trace form of the Frobenius norm, $\lVert A \rVert_{\mathrm{F}}^2 = \operatorname{tr}(AA^\top)$, and the cyclic property of the trace. Denoting
\begin{equation}
\rmS_a
:=
\sum_{j \in \mathcal{Q}_a}
(\rmQ^j)^\top(\rmQ^j),
\qquad
\rmS_b
:=
\sum_{j \in \mathcal{Q}_b}
(\rmQ^j)^\top(\rmQ^j)
\end{equation}
and summing over $j \in \mathcal{Q}_a$ gives
\begin{equation}
\sum_{j \in \mathcal{Q}_a}
\left\lVert
\rmQ^j(\rmX\rmM)^\top
-
\rmQ^j(\rmX\rmW_K^a)^\top
\right\rVert_{\mathrm{F}}^2
=
\operatorname{tr}\!\left(
(\rmM-\rmW_K^a)^\top \rmX^\top \rmX (\rmM-\rmW_K^a)\rmS_a
\right).
\end{equation}
Applying the same argument to the second summation term, $\mathcal{L}_{a,b}(\rmM)$ is equivalent to minimizing
\begin{equation}
\operatorname{tr}\!\left((\rmM-\rmW_K^a)^\top \rmX^\top \rmX (\rmM-\rmW_K^a)\rmS_a\right)
+
\operatorname{tr}\!\left((\rmM-\rmW_K^b)^\top \rmX^\top \rmX (\rmM-\rmW_K^b)\rmS_b\right).
\end{equation}
This is a convex quadratic function of $\rmM$. We find the minimizer by taking the derivative with respect to $\rmM$. Using the identity $\frac{\partial}{\partial \rmM}\operatorname{tr}\!\left(\rmM^\top \rmB \rmM \rmC\right) = 2\rmB\rmM\rmC$ for symmetric positive semidefinite $\rmB$ and $\rmC$, and applying it with $\rmM \leftarrow \rmM - \rmW_K^a$, $\rmB \leftarrow \rmX^\top\rmX$, and $\rmC \leftarrow \rmS_a$ (and analogously for the second term), the gradient of the objective with respect to $\rmM$ is
\begin{equation}
2\rmX^\top\rmX(\rmM - \rmW_K^a)\rmS_a + 2\rmX^\top\rmX(\rmM - \rmW_K^b)\rmS_b.
\end{equation}
Setting this to zero gives
\begin{equation}
\rmX^\top \rmX
\Big(
\rmM(\rmS_a+\rmS_b)
-
\rmW_K^a\rmS_a
-
\rmW_K^b\rmS_b
\Big)
=
0.
\end{equation}
Hence any $\rmM$ satisfying
\begin{equation}
\rmM(\rmS_a+\rmS_b)
=
\rmW_K^a\rmS_a + \rmW_K^b\rmS_b
\end{equation}
is optimal. One such solution is
\begin{equation}
\rmM
=
\left(\rmW_K^a\rmS_a + \rmW_K^b\rmS_b\right)
\left(\rmS_a+\rmS_b\right)^{\dagger}.
\end{equation}
Since $\rmS_a$ and $\rmS_b$ are positive semidefinite, the rows of $\rmW_K^a\rmS_a + \rmW_K^b\rmS_b$ lie in the row space of $\rmS_a+\rmS_b$, so this choice indeed satisfies the linear system above.
\end{proof}

\section{Induced permutations for query, value, and output projections}
\label{app:permute_qvo}

Once a matching has been chosen, let $\sigma=(\sigma_1,\ldots,\sigma_{2h_k^S})$ be any permutation of $\{1,\ldots,2h_k^S\}$ whose consecutive pairs are exactly the matched pairs, i.e.\ $(\sigma_{2i-1},\sigma_{2i})$ for $i=1,\ldots,h_k^S$. The purpose of the induced permutations is to re-index the teacher heads before we merge its two key heads so that each matched pair becomes consecutive.

We first permute the key and value heads by $\sigma$:
\begin{equation}
\widetilde{\rmW}_K
:=
\big[
\rmW_K^{\sigma_1}\;\; \cdots \;\; \rmW_K^{\sigma_{2h_k^S}}
\big],
\qquad
\widetilde{\rmW}_V
:=
\big[
\rmW_V^{\sigma_1}\;\; \cdots \;\; \rmW_V^{\sigma_{2h_k^S}}
\big].
\end{equation}
For the query heads, let
\begin{equation}
g := \frac{h_q}{h_k^T} = \frac{h_q}{2h_k^S},
\qquad
B_r := \big((r-1)g+1,\ldots,rg\big), \quad r=1,\ldots,2h_k^S,
\end{equation}
so $B_r$ is the block of query heads that share the teacher key head $r$. The associated query-head permutation $\pi_Q$ is the concatenation of these blocks into a single sequence of length $h_q$:
\begin{equation}
\pi_Q := \big(B_{\sigma_1}\Vert B_{\sigma_2}\Vert\cdots\Vert B_{\sigma_{2h_k^S}}\big) \in \{1,\ldots,h_q\}^{h_q},
\end{equation}
where $\Vert$ denotes concatenation, so that $\pi_Q(k)$ for $k=1,\ldots,h_q$ refers to the $k$-th entry of this flattened sequence.
Writing the query projection in head blocks,
\begin{equation}
\rmW_Q
=
\big[
\rmW_Q^1\;\; \cdots \;\; \rmW_Q^{h_q}
\big],
\end{equation}
the permuted query projection is
\begin{equation}
\widetilde{\rmW}_Q
:=
\big[
\rmW_Q^{\pi_Q(1)}\;\; \cdots \;\; \rmW_Q^{\pi_Q(h_q)}
\big].
\end{equation}

For the output projection, write $\rmW_O$ as a vertical stack of query-head input blocks,
\begin{equation}
\rmW_O
=
\begin{bmatrix}
\rmW_{O,1} \\
\vdots \\
\rmW_{O,h_q}
\end{bmatrix},
\qquad
\rmW_{O,r} \in \R^{d \times D}.
\end{equation}
We permute these input blocks by the same query-head permutation:
\begin{equation}
\widetilde{\rmW}_O
:=
\begin{bmatrix}
\rmW_{O,\pi_Q(1)} \\
\vdots \\
\rmW_{O,\pi_Q(h_q)}
\end{bmatrix}.
\end{equation}

Before merging, this re-indexing leaves the attention module unchanged. After merging, the two value heads at positions $2i-1$ and $2i$ and their associated $2g$ query heads are exactly the heads that should share the merged SAGA key head $i$.

\subsection{Merging strategies ablation}
\label{app:merging-comparison}

We compare three strategies for initializing the merged SAGA key projections on Llama~3.2-1B, measuring cross-entropy loss and per-token KL divergence from the teacher immediately after initialization (before distillation):
\begin{itemize}
    \item \textbf{Sequential pairing}: merges consecutive head pairs in their original tensor order (effectively arbitrary pairing) by averaging their key projection matrices.
    \item \textbf{Key matching}: minimum-weight perfect matching with edge weight
    \begin{equation}
    w(a,b) := \big\lVert \rmX\rmW_K^a - \rmX\rmW_K^b \big\rVert_{\mathrm{F}}^2,
    \end{equation}
    i.e.\ the same matching procedure as ours, but the graph weights are based only on the realized key matrices instead of attention logits. Here, the minimizer is the average $\tfrac{1}{2}(\rmW_K^a + \rmW_K^b)$.
    \item \textbf{Logit matching} (ours): the minimum-weight perfect matching on attention logit reconstruction error described in \cref{sec:init_saga}.
\end{itemize}

\begin{table}[h]
  \centering
  \caption{Initialization quality before distillation (Llama~3.2-1B). Lower is better for both metrics.}
  \label{tab:merging-comparison}
  \begin{tabular}{llcc}
    \toprule
    Reduction & Method & CE & KL \\
    \midrule
    $8 \to 4$ & Sequential pairing & 7.231 & 4.754 \\
    $8 \to 4$ & Key matching & 7.727 & 5.254 \\
    $8 \to 4$ & Logit matching (ours) & \textbf{6.748} & \textbf{4.271} \\
    \midrule
    $8 \to 2$ & Sequential pairing & 10.306 & 7.819 \\
    $8 \to 2$ & Key matching & 9.077 & 6.596 \\
    $8 \to 2$ & Logit matching (ours) & \textbf{7.597} & \textbf{5.125} \\
    \bottomrule
  \end{tabular}
\end{table}

Logit-matching initialization consistently achieves the lowest reconstruction error across both reduction ratios, with the advantage growing as the reduction becomes more aggressive ($8 \to 2$). The two other merging strategies do not have a clear superior method. Sequential pairing is better at $8\!\to\!4$, and key matching at $8\!\to\!2$.

\section{Implementation details}
\label{app:implementation}

Code implementation of our experiments is included in \url{https://github.com/noamelata/SAGA}.

\subsection{\atopn{} attention}
\label{app:topn-impl}

Our \atopn{} attention is an efficient implementation of the algorithm described in \cref{sec:top-n-attn}. Specifically, it uses a pre-calibrated per-head threshold to collect all entries that exceed it, greatly improving runtimes. This comparison operation uses a custom portable Triton\footnote{\url{https://github.com/triton-lang/triton}, MIT license.} kernel, replacing pytorch's CUDA-optimized \texttt{torch.topk} used for true top-N values.

During the first decode step after prefill, we compute exact attention probabilities via softmax and extract the $N$-th largest value per head using \texttt{torch.topk}. This value serves as the threshold for the subsequent $n$ decode steps. Because the attention distribution changes slowly across consecutive decode steps, we use $n=32$, thus amortizing the computational cost of \texttt{torch.topk} across 32 decode steps.

Alongside with the reduced probability-value multiplication, computing softmax only on the top-N values is a significant source of speedup. Yet, we find that thresholds in logit space tend to perform poorly. Instead, we make use of a probability-space threshold $\tau$ transformed into the logit space, a much cheaper but mathematically equivalent operation. Given raw attention logits $\mathbf{z}$, we approximate the max logit $\hat{m}$ from a small tail of positions and compute the log-sum-exp denominator $\hat{s} = \sum_i \exp(z_i - \hat{m})$. The logit-space threshold is then $\hat{m} + \log(\tau \cdot \hat{s})$, which requires only a single reduction pass over the full sequence rather than a full softmax.

The selected indices are used to gather the corresponding value vectors, after which the logit-space weights are normalized via softmax (the $-\infty$ padding receives zero probability) and multiplied with the gathered values via regular batched matrix multiplication (for batch < 16) or sparse batched matrix multiplication (otherwise) to produce the attention output.

Certain layers can be designated as exempt from Top-$N$ selection and always use full attention. In our experiments, we exempt the first and 14th attention layers for Qwen2.5-1.5B\footnote{\url{https://huggingface.co/Qwen/Qwen2.5-1.5B}, Apache 2.0 license.} and the first layer for Llama~3.2-1B\footnote{\url{https://huggingface.co/meta-llama/Llama-3.2-1B}, Llama 3.2 Community License.}, as these layers tend to have more diffuse attention patterns.

The threshold comparison and index collection are implemented as a single Triton kernel. Each row of the attention weight matrix is partitioned into blocks (autotuned over 512, 1024, and 2048 elements). Within each block, elements exceeding the threshold are identified via a comparison mask, their positions computed via an intra-block cumulative sum, and results written to an output buffer using an atomic counter to coordinate across blocks. The kernel outputs both the selected values and their indices into a fixed-size buffer of capacity $2N$, with unused slots initialized to $-\infty$. If the number of entries exceeding the threshold exceeds $2N$, the atomic counter saturates and excess entries are silently dropped, preserving memory safety. The buffer is allocated once and reused across steps, ensuring compatibility with CUDA graph capture.

\subsection{Training SAGA from scratch}
\label{app:scratch-impl}

We use the SmolLM2-360M\footnote{\url{https://huggingface.co/HuggingFaceTB/SmolLM2-360M}, Apache 2.0 license.} backbone. All models are trained on SlimPajama\footnote{\url{https://huggingface.co/datasets/cerebras/SlimPajama-627B}, Apache 2.0 license.} for 20K steps with batch size 256 and sequence length 1024, totaling $\sim$5.24B tokens. We use the AdamW optimizer~\citep{loshchilov2019decoupled} with $(\beta_1, \beta_2) = (0.9, 0.999)$ and weight decay 0.1 and a cosine learning rate schedule with 500 warmup steps, peak learning rate $3 \times 10^{-4}$, and minimum learning rate $7.5 \times 10^{-5}$. Training is performed in bfloat16, and each configuration is run with 3 seeds.

\subsection{Distillation to SAGA}
\label{app:distillation}

The student model is initialized from the teacher by merging the key projection heads according to the procedure described in \Cref{sec:init_saga}. We then reorder the teacher's query, value, and output projection heads using the induced permutations described in \Cref{app:permute_qvo} so that they remain aligned with the merged key heads, and copy all parameters unchanged from the teacher. The student is trained with a weighted combination of KL divergence from the teacher's output distribution (weight $\alpha = 0.7$, temperature $T = 3$) and hard-label cross-entropy (weight $1 - \alpha = 0.3$), using AdamW with fused implementation.

Training proceeds in three stages with increasing context length. In Stage~1, the student trains for 15K steps at context length 4096 with a learning rate of $2 \times 10^{-4}$ and global batch size 64. The learning rate schedule follows a warmup-stable-decay (WSD) pattern: 200 warmup steps, 9800 stable steps, and 5000 decay steps. The data mix is 75\% SlimPajama (packed), 15\% QA data (HotpotQA\footnote{\url{https://huggingface.co/datasets/hotpotqa/hotpot_qa}, CC BY-SA 4.0 license.} and SQuAD\footnote{\url{https://huggingface.co/datasets/rajpurkar/squad_v2}, CC BY-SA 4.0 license.}, packed), and 10\% synthetic needle-in-a-haystack examples.

Stage~2 starts from the Stage~1 checkpoint with a fresh optimizer and trains for 5K steps at context length 20480 with a learning rate of $1 \times 10^{-4}$ and global batch size 32. The WSD schedule uses 200 warmup, 3300 stable, and 1500 decay steps. The data mix shifts to 75\% SlimPajama, 5\% QA, 10\% long-context documents (LongAlign-10k\footnote{\url{https://huggingface.co/datasets/THUDM/LongAlign-10k}}, LongAlpaca-12k\footnote{\url{https://huggingface.co/datasets/Yukang/LongAlpaca-12k}}, individually padded), and 10\% synthetic NIAH.

Stage~3 continues from the Stage~2 checkpoint and trains at context length 36864 with a learning rate of $5 \times 10^{-5}$ and global batch size 32. For Qwen2.5-1.5B: 2K steps, $\alpha = 0.7$, $T = 3$, WSD schedule with 100 warmup and 500 decay steps. For Llama~3.2-1B: 4K steps, $\alpha = 0.85$, $T = 4$, WSD schedule with 200 warmup and 800 decay steps. Stage~3 interleaves 20K-length microbatches (2 per gradient step) with the 36K-length data to prevent forgetting of shorter-range attention patterns. Training at 36864 tokens provides headroom over the 32K evaluation target. Total training time is approximately 60 hours on 8$\times$H200 GPUs.

\subsection{Evaluation}
\label{app:evaluation}

Quality is evaluated using the RULER benchmark\footnote{\url{https://huggingface.co/datasets/THUDM/RULER}, Apache 2.0 license.} for long-context tasks, and HellaSwag\footnote{\url{https://huggingface.co/datasets/Rowan/hellaswag}, MIT license.} and WikiText\footnote{\url{https://huggingface.co/datasets/Salesforce/wikitext}, CC BY-SA 3.0 license.} for general capabilities. All evaluations use the lm-eval-harness framework\footnote{\url{https://github.com/EleutherAI/lm-evaluation-harness}, MIT license.}.

\section{Additional results}
\label{app:additional-results}

\subsection{Qwen2.5-1.5B speedups on H200}

\Cref{fig:speedup} reports Qwen2.5-1.5B latency measurements on B200. \cref{fig:speedup-qwen-h200} shows the corresponding results on H200, confirming that SAGA's speedup gains are consistent across GPU generations.

\begin{figure}[h]
  \centering
  \includegraphics[width=\textwidth]{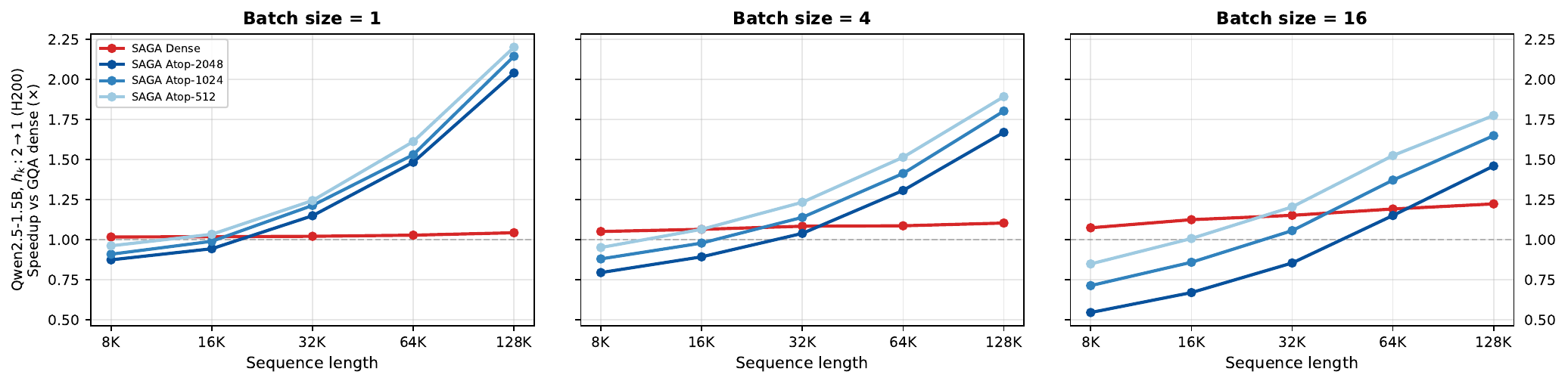}
  \caption{Speedup over baseline full attention on Qwen2.5-1.5B ($h_k: 2 \to 1$) measured on H200 at various batch sizes and sequence lengths.}
  \label{fig:speedup-qwen-h200}
\end{figure}

\subsection{Qwen3-0.6B trained from scratch on 50B tokens}
\label{app:additional-from-scratch}

We train a Qwen3-0.6B-based architecture from scratch on 50B tokens. This model is approximately $1.7\times$ larger than the SmolLM2-360M model evaluated in \cref{sec:scratch}, and its training budget is $9.54\times$ larger. We run a single seed for each configuration due to computational limitations. \cref{tab:qwen3-scratch-50b} shows that SAGA with $(h_k, h_v) = (1, 8)$ is closer to the high-capacity $(8, 8)$ baseline than to the matched-key-budget $(1, 1)$ baseline on HellaSwag accuracy, WikiText perplexity, and validation accuracy. Thus, the same trend holds at this larger model and training scale.

\begin{table}[h]
  \centering
  \caption{From-scratch results for a Qwen3-0.6B-based architecture trained on 50B tokens.}
  \label{tab:qwen3-scratch-50b}
  \small
  \begin{tabular}{lccc}
    \toprule
    Configuration & HellaSwag (acc\_norm $\uparrow$) & WikiText (word ppl $\downarrow$) & Validation accuracy ($\uparrow$) \\
    \midrule
    $(h_k, h_v) = (1, 1)$ & $0.3310 \pm 0.0047$ & 34.9129 & 0.440 \\
    $(h_k, h_v) = (1, 8)$ & $0.3343 \pm 0.0047$ & 33.8604 & 0.443 \\
    $(h_k, h_v) = (8, 8)$ & $0.3368 \pm 0.0047$ & 33.3382 & 0.444 \\
    \bottomrule
  \end{tabular}
\end{table}

\subsection{Llama~3.2-1B RULER accuracy}

\Cref{fig:ruler-accuracy-llama} presents the per-task RULER accuracy breakdown for Llama~3.2-1B ($h_k: 8 \to 4$), complementing the Qwen2.5-1.5B results in the main text. The SAGA student retains the majority of baseline quality, with degradation again concentrated in multi-key retrieval tasks.

\begin{figure}[h]
  \centering
  \includegraphics[width=\textwidth]{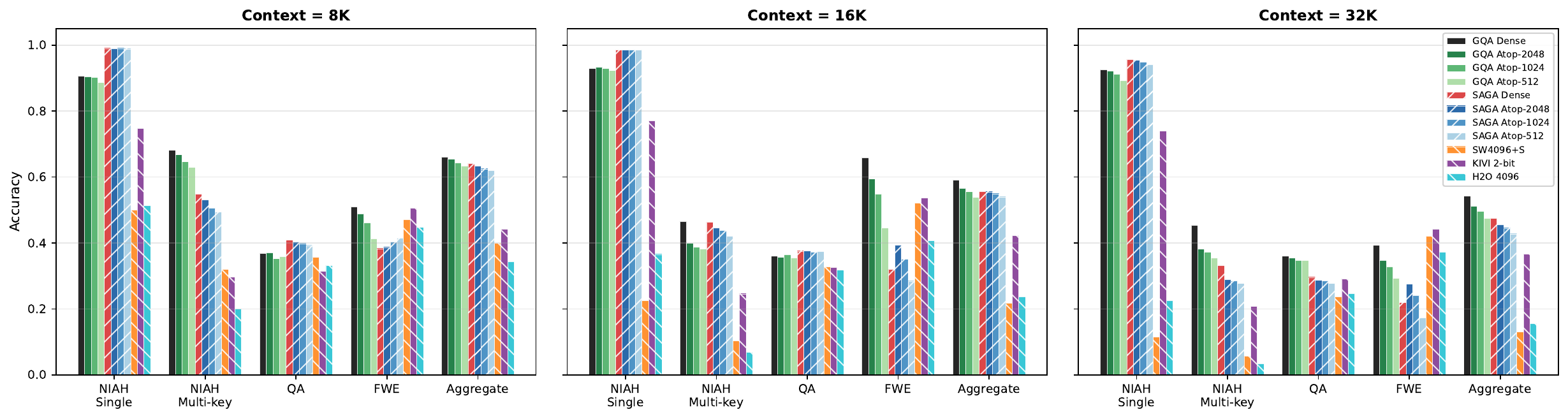}
  \caption{Per-task RULER accuracy for Llama~3.2-1B ($h_k: 8 \to 4$) with full attention and \atopn{} attention at $N \in \{512, 1024, 2048\}$, across context lengths 8K, 16K, and 32K.}
  \label{fig:ruler-accuracy-llama}
\end{figure}

\subsection{Kernel-level decode profile}
\label{app:profile}

\cref{tab:profile} profiles one decode step of Llama~3.2-1B at 64K context and batch size 16 on an H100. Under full attention the query-key and probability-value multiplications cost a comparable amount (870 vs.\ 1091\,$\mu$s). Merging key heads $8 \to 4$ halves the query-key term while leaving the probability-value term unchanged, and \atopn{} reduces the probability-value term to 88\,$\mu$s, leaving the query-key multiplication as the dominant remaining cost. The two components thus address the two bottlenecks of \cref{fig:saga-diagram}.

\begin{table}[h]
  \centering
  \caption{Decode-step profile of Llama~3.2-1B ($h_k: 8 \to 4$) at 64K context, batch size 16, on H100. Kernel times are $\mu$s per non-exempt layer per decode step; the last rows give end-to-end per-token latency.}
  \label{tab:profile}
  \small
  \begin{tabular}{lrrr}
    \toprule
    Kernel & GQA (full) & SAGA (full) & SAGA + \atopn{} 1024 \\
    \midrule
    Query-key multiplication & 870.1 & 466.0 & 446.0 \\
    Probability-value multiplication & 1090.9 & 1091.1 & 87.5 \\
    Softmax / threshold & 249.8 & 257.3 & 107.7 \\
    Gather & 1.2 & 1.2 & 88.8 \\
    Q/K/V/O projections & 22.8 & 23.3 & 22.8 \\
    MLP & 42.7 & 43.0 & 43.3 \\
    \midrule
    Layer total & 2277.6 & 1881.9 & 796.0 \\
    \midrule
    End-to-end (ms / token) & 40.57 & 31.08 & 18.99 \\
    Speedup & $1.00\times$ & $1.31\times$ & $2.14\times$ \\
    \bottomrule
  \end{tabular}
\end{table}

\subsection{Decode latency at larger model scale}
\label{app:scale}

Decode latency depends on tensor shapes and kernels rather than on weight values, so the efficiency of SAGA can be measured on larger architectures without training them. \cref{tab:scale-32b} reports per-token decode latency of the Qwen2.5-32B architecture ($h_k: 8 \to 4$) with untrained weights at 128K context on H100. The speedup of SAGA with \atopn{} persists at this scale and grows with batch size, as in the smaller models, though at a slower rate. Quality at this scale is not evaluated.

\begin{table}[h]
  \centering
  \caption{Per-token decode latency (ms) of the Qwen2.5-32B architecture ($h_k: 8 \to 4$) at 128K context on H100.}
  \label{tab:scale-32b}
  \small
  \begin{tabular}{lrrr}
    \toprule
    & Batch 1 & Batch 4 & Batch 16 \\
    \midrule
    GQA + full attention & 46.90 & 84.99 & 263.06 \\
    SAGA + \atopn{} 1024 & 39.43 & 68.93 & 175.06 \\
    Speedup & $1.19\times$ & $1.23\times$ & $1.50\times$ \\
    \bottomrule
  \end{tabular}
\end{table}

\subsection{Separating the head configuration from the selector}
\label{app:disentangle}

\cref{tab:disentangle} reports Qwen2.5-1.5B decode speedups for GQA and SAGA under full attention, \atopn{}, and \emph{exact} top-N, which recomputes the true top-$N$ indices with \texttt{torch.topk} at every step. Comparing GQA with and without \atopn{} isolates the selector; comparing SAGA against GQA under the same attention method isolates the head configuration. SAGA is faster under all three, e.g.\ $1.23\times$ over GQA with \atopn{} and $1.15\times$ with exact top-N at batch size 4 and 128K, so the architectural gain does not depend on the threshold approximation of \atopn{}. Exact top-N is slower than dense attention except at the longest contexts, which is what motivates \atopn{}.

\begin{table}[h]
  \centering
  \caption{Decode speedup of Qwen2.5-1.5B relative to GQA with full attention at the same batch size (H100), $N = 1024$.}
  \label{tab:disentangle}
  \small
  \begin{tabular}{cllrrrrr}
    \toprule
    Batch & Heads & Attention & 8K & 16K & 32K & 64K & 128K \\
    \midrule
    1 & GQA  & full        & 1.00 & 1.00 & 1.00 & 1.00 & 1.00 \\
    1 & GQA  & \atopn{}    & 0.93 & 1.00 & 1.17 & 1.43 & 1.88 \\
    1 & GQA  & exact top-N & 0.68 & 0.63 & 0.89 & 1.08 & 1.39 \\
    1 & SAGA & full        & 1.00 & 1.02 & 1.05 & 1.05 & 1.04 \\
    1 & SAGA & \atopn{}    & 0.95 & 1.02 & 1.23 & 1.53 & 2.06 \\
    1 & SAGA & exact top-N & 0.68 & 0.64 & 0.92 & 1.14 & 1.48 \\
    \midrule
    4 & GQA  & full        & 1.00 & 1.00 & 1.00 & 1.00 & 1.00 \\
    4 & GQA  & \atopn{}    & 0.85 & 0.93 & 1.06 & 1.24 & 1.51 \\
    4 & GQA  & exact top-N & 0.67 & 0.73 & 0.82 & 0.96 & 1.10 \\
    4 & SAGA & full        & 1.05 & 1.05 & 1.08 & 1.11 & 1.14 \\
    4 & SAGA & \atopn{}    & 0.89 & 0.98 & 1.16 & 1.41 & 1.86 \\
    4 & SAGA & exact top-N & 0.69 & 0.75 & 0.87 & 1.05 & 1.27 \\
    \bottomrule
  \end{tabular}
\end{table}


\ifshowchecklist
\newpage
\section*{NeurIPS Paper Checklist}

\begin{enumerate}

\item {\bf Claims}
    \item[] Question: Do the main claims made in the abstract and introduction accurately reflect the paper's contributions and scope?
    \item[] Answer: \answerYes{} 
    \item[] Justification: The paper claims, as described in the abstract, are reflected by the method and experiment sections.
    \item[] Guidelines:
    \begin{itemize}
        \item The answer \answerNA{} means that the abstract and introduction do not include the claims made in the paper.
        \item The abstract and/or introduction should clearly state the claims made, including the contributions made in the paper and important assumptions and limitations. A \answerNo{} or \answerNA{} answer to this question will not be perceived well by the reviewers. 
        \item The claims made should match theoretical and experimental results, and reflect how much the results can be expected to generalize to other settings. 
        \item It is fine to include aspirational goals as motivation as long as it is clear that these goals are not attained by the paper. 
    \end{itemize}

\item {\bf Limitations}
    \item[] Question: Does the paper discuss the limitations of the work performed by the authors?
    \item[] Answer: \answerYes{} 
    \item[] Justification: Limitations are discussed in the limitation section.
    \item[] Guidelines:
    \begin{itemize}
        \item The answer \answerNA{} means that the paper has no limitation while the answer \answerNo{} means that the paper has limitations, but those are not discussed in the paper. 
        \item The authors are encouraged to create a separate ``Limitations'' section in their paper.
        \item The paper should point out any strong assumptions and how robust the results are to violations of these assumptions (e.g., independence assumptions, noiseless settings, model well-specification, asymptotic approximations only holding locally). The authors should reflect on how these assumptions might be violated in practice and what the implications would be.
        \item The authors should reflect on the scope of the claims made, e.g., if the approach was only tested on a few datasets or with a few runs. In general, empirical results often depend on implicit assumptions, which should be articulated.
        \item The authors should reflect on the factors that influence the performance of the approach. For example, a facial recognition algorithm may perform poorly when image resolution is low or images are taken in low lighting. Or a speech-to-text system might not be used reliably to provide closed captions for online lectures because it fails to handle technical jargon.
        \item The authors should discuss the computational efficiency of the proposed algorithms and how they scale with dataset size.
        \item If applicable, the authors should discuss possible limitations of their approach to address problems of privacy and fairness.
        \item While the authors might fear that complete honesty about limitations might be used by reviewers as grounds for rejection, a worse outcome might be that reviewers discover limitations that aren't acknowledged in the paper. The authors should use their best judgment and recognize that individual actions in favor of transparency play an important role in developing norms that preserve the integrity of the community. Reviewers will be specifically instructed to not penalize honesty concerning limitations.
    \end{itemize}

\item {\bf Theory assumptions and proofs}
    \item[] Question: For each theoretical result, does the paper provide the full set of assumptions and a complete (and correct) proof?
    \item[] Answer: \answerYes{} 
    \item[] Justification: Theoretical results include complete proofs as well as assumptions.
    \item[] Guidelines:
    \begin{itemize}
        \item The answer \answerNA{} means that the paper does not include theoretical results. 
        \item All the theorems, formulas, and proofs in the paper should be numbered and cross-referenced.
        \item All assumptions should be clearly stated or referenced in the statement of any theorems.
        \item The proofs can either appear in the main paper or the supplemental material, but if they appear in the supplemental material, the authors are encouraged to provide a short proof sketch to provide intuition. 
        \item Inversely, any informal proof provided in the core of the paper should be complemented by formal proofs provided in appendix or supplemental material.
        \item Theorems and Lemmas that the proof relies upon should be properly referenced. 
    \end{itemize}

    \item {\bf Experimental result reproducibility}
    \item[] Question: Does the paper fully disclose all the information needed to reproduce the main experimental results of the paper to the extent that it affects the main claims and/or conclusions of the paper (regardless of whether the code and data are provided or not)?
    \item[] Answer: \answerYes{} 
    \item[] Justification: Full implementation details are provided in the paper and supplementary material. Data used for evaluation and training is publicly available.
    \item[] Guidelines:
    \begin{itemize}
        \item The answer \answerNA{} means that the paper does not include experiments.
        \item If the paper includes experiments, a \answerNo{} answer to this question will not be perceived well by the reviewers: Making the paper reproducible is important, regardless of whether the code and data are provided or not.
        \item If the contribution is a dataset and\slash or model, the authors should describe the steps taken to make their results reproducible or verifiable. 
        \item Depending on the contribution, reproducibility can be accomplished in various ways. For example, if the contribution is a novel architecture, describing the architecture fully might suffice, or if the contribution is a specific model and empirical evaluation, it may be necessary to either make it possible for others to replicate the model with the same dataset, or provide access to the model. In general. releasing code and data is often one good way to accomplish this, but reproducibility can also be provided via detailed instructions for how to replicate the results, access to a hosted model (e.g., in the case of a large language model), releasing of a model checkpoint, or other means that are appropriate to the research performed.
        \item While NeurIPS does not require releasing code, the conference does require all submissions to provide some reasonable avenue for reproducibility, which may depend on the nature of the contribution. For example
        \begin{enumerate}
            \item If the contribution is primarily a new algorithm, the paper should make it clear how to reproduce that algorithm.
            \item If the contribution is primarily a new model architecture, the paper should describe the architecture clearly and fully.
            \item If the contribution is a new model (e.g., a large language model), then there should either be a way to access this model for reproducing the results or a way to reproduce the model (e.g., with an open-source dataset or instructions for how to construct the dataset).
            \item We recognize that reproducibility may be tricky in some cases, in which case authors are welcome to describe the particular way they provide for reproducibility. In the case of closed-source models, it may be that access to the model is limited in some way (e.g., to registered users), but it should be possible for other researchers to have some path to reproducing or verifying the results.
        \end{enumerate}
    \end{itemize}

\item {\bf Open access to data and code}
    \item[] Question: Does the paper provide open access to the data and code, with sufficient instructions to faithfully reproduce the main experimental results, as described in supplemental material?
    \item[] Answer: \answerYes{}{} 
    \item[] Justification: Code implementation of our experiment is available in the supplementary material.
    \item[] Guidelines:
    \begin{itemize}
        \item The answer \answerNA{} means that paper does not include experiments requiring code.
        \item Please see the NeurIPS code and data submission guidelines (\url{https://neurips.cc/public/guides/CodeSubmissionPolicy}) for more details.
        \item While we encourage the release of code and data, we understand that this might not be possible, so \answerNo{} is an acceptable answer. Papers cannot be rejected simply for not including code, unless this is central to the contribution (e.g., for a new open-source benchmark).
        \item The instructions should contain the exact command and environment needed to run to reproduce the results. See the NeurIPS code and data submission guidelines (\url{https://neurips.cc/public/guides/CodeSubmissionPolicy}) for more details.
        \item The authors should provide instructions on data access and preparation, including how to access the raw data, preprocessed data, intermediate data, and generated data, etc.
        \item The authors should provide scripts to reproduce all experimental results for the new proposed method and baselines. If only a subset of experiments are reproducible, they should state which ones are omitted from the script and why.
        \item At submission time, to preserve anonymity, the authors should release anonymized versions (if applicable).
        \item Providing as much information as possible in supplemental material (appended to the paper) is recommended, but including URLs to data and code is permitted.
    \end{itemize}

\item {\bf Experimental setting/details}
    \item[] Question: Does the paper specify all the training and test details (e.g., data splits, hyperparameters, how they were chosen, type of optimizer) necessary to understand the results?
    \item[] Answer: \answerYes{}{} 
    \item[] Justification: See implementation details.
    \item[] Guidelines:
    \begin{itemize}
        \item The answer \answerNA{} means that the paper does not include experiments.
        \item The experimental setting should be presented in the core of the paper to a level of detail that is necessary to appreciate the results and make sense of them.
        \item The full details can be provided either with the code, in appendix, or as supplemental material.
    \end{itemize}

\item {\bf Experiment statistical significance}
    \item[] Question: Does the paper report error bars suitably and correctly defined or other appropriate information about the statistical significance of the experiments?
    \item[] Answer: \answerNo{} 
    \item[] Justification: For the speedup experiments, error bars are not reported because the std is small enough to be ignored while making the graphs harder to follow. For the pretraining section, error bars are provided and defined.
    \item[] Guidelines:
    \begin{itemize}
        \item The answer \answerNA{} means that the paper does not include experiments.
        \item The authors should answer \answerYes{} if the results are accompanied by error bars, confidence intervals, or statistical significance tests, at least for the experiments that support the main claims of the paper.
        \item The factors of variability that the error bars are capturing should be clearly stated (for example, train/test split, initialization, random drawing of some parameter, or overall run with given experimental conditions).
        \item The method for calculating the error bars should be explained (closed form formula, call to a library function, bootstrap, etc.)
        \item The assumptions made should be given (e.g., Normally distributed errors).
        \item It should be clear whether the error bar is the standard deviation or the standard error of the mean.
        \item It is OK to report 1-sigma error bars, but one should state it. The authors should preferably report a 2-sigma error bar than state that they have a 96\% CI, if the hypothesis of Normality of errors is not verified.
        \item For asymmetric distributions, the authors should be careful not to show in tables or figures symmetric error bars that would yield results that are out of range (e.g., negative error rates).
        \item If error bars are reported in tables or plots, the authors should explain in the text how they were calculated and reference the corresponding figures or tables in the text.
    \end{itemize}

\item {\bf Experiments compute resources}
    \item[] Question: For each experiment, does the paper provide sufficient information on the computer resources (type of compute workers, memory, time of execution) needed to reproduce the experiments?
    \item[] Answer: \answerYes{} 
    \item[] Justification: We provide these implementation details.
    \item[] Guidelines:
    \begin{itemize}
        \item The answer \answerNA{} means that the paper does not include experiments.
        \item The paper should indicate the type of compute workers CPU or GPU, internal cluster, or cloud provider, including relevant memory and storage.
        \item The paper should provide the amount of compute required for each of the individual experimental runs as well as estimate the total compute. 
        \item The paper should disclose whether the full research project required more compute than the experiments reported in the paper (e.g., preliminary or failed experiments that didn't make it into the paper). 
    \end{itemize}
    
\item {\bf Code of ethics}
    \item[] Question: Does the research conducted in the paper conform, in every respect, with the NeurIPS Code of Ethics \url{https://neurips.cc/public/EthicsGuidelines}?
    \item[] Answer: \answerYes{}{} 
    \item[] Justification: This research conforms with the NeurIPS Code of Ethics.
    \item[] Guidelines:
    \begin{itemize}
        \item The answer \answerNA{} means that the authors have not reviewed the NeurIPS Code of Ethics.
        \item If the authors answer \answerNo, they should explain the special circumstances that require a deviation from the Code of Ethics.
        \item The authors should make sure to preserve anonymity (e.g., if there is a special consideration due to laws or regulations in their jurisdiction).
    \end{itemize}

\item {\bf Broader impacts}
    \item[] Question: Does the paper discuss both potential positive societal impacts and negative societal impacts of the work performed?
    \item[] Answer: \answerNA{} 
    \item[] Justification: The paper is on the topic of LLM architecture and efficiency. It has no societal impacts beyond the general in use of LLM technology.
    \item[] Guidelines:
    \begin{itemize}
        \item The answer \answerNA{} means that there is no societal impact of the work performed.
        \item If the authors answer \answerNA{} or \answerNo, they should explain why their work has no societal impact or why the paper does not address societal impact.
        \item Examples of negative societal impacts include potential malicious or unintended uses (e.g., disinformation, generating fake profiles, surveillance), fairness considerations (e.g., deployment of technologies that could make decisions that unfairly impact specific groups), privacy considerations, and security considerations.
        \item The conference expects that many papers will be foundational research and not tied to particular applications, let alone deployments. However, if there is a direct path to any negative applications, the authors should point it out. For example, it is legitimate to point out that an improvement in the quality of generative models could be used to generate Deepfakes for disinformation. On the other hand, it is not needed to point out that a generic algorithm for optimizing neural networks could enable people to train models that generate Deepfakes faster.
        \item The authors should consider possible harms that could arise when the technology is being used as intended and functioning correctly, harms that could arise when the technology is being used as intended but gives incorrect results, and harms following from (intentional or unintentional) misuse of the technology.
        \item If there are negative societal impacts, the authors could also discuss possible mitigation strategies (e.g., gated release of models, providing defenses in addition to attacks, mechanisms for monitoring misuse, mechanisms to monitor how a system learns from feedback over time, improving the efficiency and accessibility of ML).
    \end{itemize}
    
\item {\bf Safeguards}
    \item[] Question: Does the paper describe safeguards that have been put in place for responsible release of data or models that have a high risk for misuse (e.g., pre-trained language models, image generators, or scraped datasets)?
    \item[] Answer: \answerNA{} 
    \item[] Justification: The paper poses no such risks.
    \item[] Guidelines:
    \begin{itemize}
        \item The answer \answerNA{} means that the paper poses no such risks.
        \item Released models that have a high risk for misuse or dual-use should be released with necessary safeguards to allow for controlled use of the model, for example by requiring that users adhere to usage guidelines or restrictions to access the model or implementing safety filters. 
        \item Datasets that have been scraped from the Internet could pose safety risks. The authors should describe how they avoided releasing unsafe images.
        \item We recognize that providing effective safeguards is challenging, and many papers do not require this, but we encourage authors to take this into account and make a best faith effort.
    \end{itemize}

\item {\bf Licenses for existing assets}
    \item[] Question: Are the creators or original owners of assets (e.g., code, data, models), used in the paper, properly credited and are the license and terms of use explicitly mentioned and properly respected?
    \item[] Answer: \answerYes{} 
    \item[] Justification: Datasets, models and code is properly cited and credited. Terms of used are properly respected.
    \item[] Guidelines:
    \begin{itemize}
        \item The answer \answerNA{} means that the paper does not use existing assets.
        \item The authors should cite the original paper that produced the code package or dataset.
        \item The authors should state which version of the asset is used and, if possible, include a URL.
        \item The name of the license (e.g., CC-BY 4.0) should be included for each asset.
        \item For scraped data from a particular source (e.g., website), the copyright and terms of service of that source should be provided.
        \item If assets are released, the license, copyright information, and terms of use in the package should be provided. For popular datasets, \url{paperswithcode.com/datasets} has curated licenses for some datasets. Their licensing guide can help determine the license of a dataset.
        \item For existing datasets that are re-packaged, both the original license and the license of the derived asset (if it has changed) should be provided.
        \item If this information is not available online, the authors are encouraged to reach out to the asset's creators.
    \end{itemize}

\item {\bf New assets}
    \item[] Question: Are new assets introduced in the paper well documented and is the documentation provided alongside the assets?
    \item[] Answer: \answerYes{} 
    \item[] Justification: Documentation is provided.
    \item[] Guidelines:
    \begin{itemize}
        \item The answer \answerNA{} means that the paper does not release new assets.
        \item Researchers should communicate the details of the dataset\slash code\slash model as part of their submissions via structured templates. This includes details about training, license, limitations, etc. 
        \item The paper should discuss whether and how consent was obtained from people whose asset is used.
        \item At submission time, remember to anonymize your assets (if applicable). You can either create an anonymized URL or include an anonymized zip file.
    \end{itemize}

\item {\bf Crowdsourcing and research with human subjects}
    \item[] Question: For crowdsourcing experiments and research with human subjects, does the paper include the full text of instructions given to participants and screenshots, if applicable, as well as details about compensation (if any)? 
    \item[] Answer: \answerNA{} 
    \item[] Justification: The paper does not involve crowdsourcing nor research with human subjects.
    \item[] Guidelines:
    \begin{itemize}
        \item The answer \answerNA{} means that the paper does not involve crowdsourcing nor research with human subjects.
        \item Including this information in the supplemental material is fine, but if the main contribution of the paper involves human subjects, then as much detail as possible should be included in the main paper. 
        \item According to the NeurIPS Code of Ethics, workers involved in data collection, curation, or other labor should be paid at least the minimum wage in the country of the data collector. 
    \end{itemize}

\item {\bf Institutional review board (IRB) approvals or equivalent for research with human subjects}
    \item[] Question: Does the paper describe potential risks incurred by study participants, whether such risks were disclosed to the subjects, and whether Institutional Review Board (IRB) approvals (or an equivalent approval/review based on the requirements of your country or institution) were obtained?
    \item[] Answer: \answerNA{} 
    \item[] Justification: The paper does not involve crowdsourcing nor research with human subjects.
    \item[] Guidelines:
    \begin{itemize}
        \item The answer \answerNA{} means that the paper does not involve crowdsourcing nor research with human subjects.
        \item Depending on the country in which research is conducted, IRB approval (or equivalent) may be required for any human subjects research. If you obtained IRB approval, you should clearly state this in the paper. 
        \item We recognize that the procedures for this may vary significantly between institutions and locations, and we expect authors to adhere to the NeurIPS Code of Ethics and the guidelines for their institution. 
        \item For initial submissions, do not include any information that would break anonymity (if applicable), such as the institution conducting the review.
    \end{itemize}

\item {\bf Declaration of LLM usage}
    \item[] Question: Does the paper describe the usage of LLMs if it is an important, original, or non-standard component of the core methods in this research? Note that if the LLM is used only for writing, editing, or formatting purposes and does \emph{not} impact the core methodology, scientific rigor, or originality of the research, declaration is not required.
    \item[] Answer: \answerYes{} 
    \item[] Justification: The paper is on the topic of LLM architecture and efficency, and thus uses LLMs.
    \item[] Guidelines:
    \begin{itemize}
        \item The answer \answerNA{} means that the core method development in this research does not involve LLMs as any important, original, or non-standard components.
        \item Please refer to our LLM policy in the NeurIPS handbook for what should or should not be described.
    \end{itemize}

\end{enumerate}
\fi

\end{document}